\documentclass[11pt]{article}

\usepackage{mystyle}
\usepackage{tikz}
\usetikzlibrary{arrows.meta}
\usepackage{fullpage}
\usepackage{etoolbox}

\makeatletter
\patchcmd{\@maketitle}{\vskip 2em}{\vspace*{-2.5em}}{}{\PackageError{main}{Title spacing adjustment failed}{}}
\makeatother

\title{\LARGE Minimax-Optimal Online Contract Design
with Unrestricted Bounded Contracts}
\author{Rui Ai\thanks{Massachusetts Institute of Technology. Email: \texttt{ruiai@mit.edu}.}
\and David Simchi-Levi\thanks{Massachusetts Institute of Technology. Email: \texttt{dslevi@mit.edu}.}
\and Han Zhong\thanks{Shanghai Jiao Tong University. Email: \texttt{han.zhong@sjtu.edu.cn}.}}
\date{}

\begin{document}
\maketitle

\begin{abstract}
We study repeated contract design when a principal observes outcomes but not the actions that generate them. The principal may use any bounded outcome-contingent payment vector, and the agent's best response can make expected profit discontinuous in those payments.
For every fixed number $m\ge2$ of outcomes, the minimax
regret over $T$ rounds is of order $T^{m/(m+1)}$,
up to logarithmic factors.
The upper bound allows arbitrary action spaces and agent heterogeneity, without smoothness or monotone-surplus assumptions.
Its key is an effective-dimension reduction that the benchmark can be normalized even when fixed tie-breaking is not shift invariant, after which revealed preference yields a monotone response map in payment-difference coordinates.
A learning policy built on a Lipschitz parametrization of this map attains the rate using only observed outcome categories. 
The lower-bound construction accounts for how incentive losses accumulate across outcome dimensions. It shows that each additional contractible outcome creates a precise and unavoidable increase in the worst-case cost of learning.
\end{abstract}

\noindent\textbf{Keywords:} online contract design; principal-agent problem; minimax regret.

\section{Introduction}

Outcome-contingent pay is most useful when effort is hidden. A platform may pay a provider by customer-rating category, or a buyer may condition procurement payments on a quality grade. When the environment is unknown, the payment schedule must both create incentives and generate information. This learning problem is unusually irregular in that a small payment change can switch the agent's hidden action and discontinuously change the principal's profit, so neither smooth optimization nor a naive discretization of payment space provides a general solution.

We characterize the minimax regret of this unrestricted finite-outcome problem up to logarithmic factors. For every fixed number of outcome categories, the sharp exponent is the number of outcomes divided by one plus that number, while with one outcome, optimal regret is zero. The upper bound permits heterogeneous agents, arbitrary action spaces, and discontinuous responses. The matching lower bound already holds with one type, finitely many actions, bounded nonnegative costs, and a common deterministic tie-breaking rule. Thus, richer outcome classifications have a precise worst-case learning cost even though they may make contracts more expressive.
The improvement is quantitative even in the smallest nontrivial case. For two outcomes, the closest general upper bound of \citet{zhu2023sample} scales as $\widetilde O(T^{4/5})$, whereas our sharp rate is $\widetilde\Theta(T^{2/3})$. For general fixed $m$, the upper exponent improves from $1-1/(2m+1)$ to $1-1/(m+1)=m/(m+1)$. The gain comes without restricting the number of actions or imposing response regularity.

Three contributions deliver this characterization. First, we close the upper-lower gap for the unrestricted fixed-dimensional model under a single convention in which $m$ counts all outcomes, including any null outcome. Second, we expose a reduced monotone geometry and turn it into an outcome-feedback policy that does not know the action set, costs, type distribution, outcome laws, or response map. Third, we give an aggregate-response lower-bound construction that works in every outcome dimension and uses one common deterministic tie-breaking rule across all hard instances. Together, these results identify both the optimal statistical rate and the mechanism that determines it.

The upper bound rests on an effective-dimension reduction. A common shift of all payments preserves every agent's best-response set but may change which tied action a fixed selector chooses. We therefore cannot simply quotient the selected response by common shifts. A perturbation argument instead shows that the benchmark supremum can be restricted to contracts with minimum payment zero without assuming shift-invariant selection. In the resulting payment-difference coordinates, revealed preference makes the selected outcome law monotone. Adding the contract coordinate to this law produces a \emph{Minty coordinate} that parametrizes the selected-response graph in a Lipschitz way, even though the contract-to-profit map can jump.
This geometry changes what the learner discretizes. Rather than cover the discontinuous profit function, the policy covers a known ambient cube of Minty coordinates. Each grid point drives a projected stochastic-approximation simulator using only the realized outcome category, and a rested upper-confidence master allocates rounds among simulators. A simulator near the optimal graph coordinate has uniformly small prefix regret, and the master competes with it. Balancing graph approximation with statistical allocation yields the upper rate.

The distinction between contract space and graph space is central. An ordinary payment grid can approach a near-optimal contract yet induce a different action and profit. Our grid instead approximates Minty coordinates of reduced contracts and their selected outcome laws. Stochastic approximation of the strongly monotone residual yields a uniform prefix-regret certificate that survives the rested master's adaptive allocation. This is why discontinuity of the original profit function does not prevent learning at the sharp rate.

For the lower bound, we build many finite-action alternatives that differ from a baseline only through one action cost. The distinguished action is selected according to its \emph{aggregate} payoff deficit across coordinates. These aggregate response regions are disjoint, contain contracts with a controlled profit improvement, and localize the statistical information about their associated alternatives. A change-of-measure argument then gives the same exponent as the upper bound.
This construction also resolves a dimension issue in the closest unrestricted benchmark. \citet{zhu2023sample} establish the binary lower bound and a general upper bound, but their multidimensional appendix uses more outcome coordinates than the main text's dimension convention and a product response cell that treats coordinatewise losses separately even though those losses accumulate while the distinguished action receives its cost reduction only once. That calculation therefore does not establish the stated multidimensional exponent when dimension counts total outcomes. Our self-contained aggregate-region construction avoids this step and we give the exact comparison in Section~\ref{sec:lower}.

The selected response in our model is fixed before learning and need not be known to the principal. This assumption is automatic under a unique best response and is implemented by any predetermined priority rule under ties. It excludes only history-dependent adversarial tie-breaking, which would make the response law itself nonstationary. The policy observes the outcome category used to settle the contract, not hidden actions, types, costs, or outcome distributions. It requires neither ordered outcomes, monotone effort, smooth behavior, nor a parametric response model.

The operational message is two-sided. Refining a performance classification can enlarge the set of incentives a principal can express, but without behavioral structure it also raises the worst-case cost of finding a good contract. 
On the other hand, our normalization reduces the $m$ payment coordinates to $m-1$ independent payment differences. The minimax rate quantifies how this dimension affects worst-case learning, while the contracting benefit of finer outcome classifications depends on the application.

\begingroup
\emergencystretch=2em
\paragraph{Related work.}
Classical hidden-action theory studies known primitives \citep{holmstrom1979moral,grossman1983analysis}, whereas algorithmic contract design asks how actions, outcomes, and private types affect computation, approximation, and statistical complexity. \citet{guruganesh2021contracts} study moral hazard together with adverse selection, while \citet{duetting2025pseudo} characterize statistical complexity through the pseudo-dimension of contract classes.
\par\endgroup

Online contract design instead requires the principal to learn from repeated outcomes. \citet{ho2016adaptive} use bandit methods and adaptive discretization in crowdsourcing markets.
\citet{cohen2023learning} study how to learn monotone-smooth contracts for identical agents under stochastic dominance and bounded risk aversion.
The closest unrestricted benchmark is \citet{zhu2023sample}. We retain its finite-outcome model, and our improvement comes from reduced monotone geometry rather than additional response regularity. Complementary results also exploit structure absent here, e.g., \citet{bacchiocchi2025learning} obtain polynomial sample complexity and improved regret when the number of actions is fixed, while \citet{zuo2025continuous} studies continuous actions under first-order or Lipschitz regularity.
\citet{chen2024bounded} learn near-optimal bounded contracts using polynomially many queries under first-order stochastic dominance and diminishing returns to effort.
\citet{bacchiocchi2025approximate} study contract learning with approximate best responses and evaluate each contract by the principal's worst expected payoff among those responses.
Our fresh-agent stationary-response model is also distinct from repeated contracting with a persistent no-regret learning agent \citep{guruganesh2024contracting}. Methodologically, we combine monotone-operator ideas \citep{minty1962monotone,rockafellar1976monotone}, stochastic approximation \citep{robbins1951stochastic}, martingale concentration \citep{freedman1975tail}, and upper-confidence allocation \citep{auer2002finite}.

The paper proceeds as follows. Sections~\ref{sec:model}-\ref{sec:proof-main} give the model, geometry, algorithm, and upper bound. Section~\ref{sec:lower} develops the lower bound, and we defer all remaining proofs to the appendices.

\section{Model and Learning Objective}\label{sec:model}

We assume there are $m<\infty$ outcomes. The principal's public value vector is $v\in[0,1]^m$, and a contract is a payment vector $f\in[0,1]^m$. Write $\Delta_m=\{p\in\mathbb R_+^m:\sum_jp_j=1\}$.

\begin{assumption}[Selected best responses]\label{ass:selected}
At each round, a type $\theta$ is drawn independently from a fixed distribution. Type $\theta$ has an action set $A_\theta$. Action $a$ has outcome law $p_{\theta,a}\in\Delta_m$ and cost $c_{\theta,a}\in\mathbb R$. For every $\theta$ and $f$, a selected response
\[
    a_\theta(f)\in\argmax_{a\in A_\theta}\{p_{\theta,a}\cdot f-c_{\theta,a}\}
\]
exists, is fixed independently of the learner's history, and induces a measurable map $(\theta,f)\mapsto p_{\theta,a_\theta(f)}$.
\end{assumption}
Assumption~\ref{ass:selected} is mild in practice in that, for instance, any finite action set with a fixed measurable priority rule satisfies it. The assumption imposes no stochastic ordering, smoothness, response continuity, or separate participation constraint.
For finite action sets, fixing an action priority before learning makes the selected response automatic. For infinite action spaces, the assumption records only the existence of a maximizer and the measurability of the induced outcome law. In particular, it does not require the selection to be invariant when the same constant is added to every payment. That distinction matters for normalization and is handled explicitly in Lemma~\ref{lem:normalization}.

Let $P(f)=\mathbb E_\theta[p_{\theta,a_\theta(f)}]\in\Delta_m$. After posting $f$ given history $\mathcal H$, the principal observes the realized outcome as a one-hot vector $X$ satisfying
\begin{equation}\label{eq:conditional-mean}
    \mathbb E[X\mid f,\mathcal H]=P(f).
\end{equation}
This is outcome-category feedback, as in \citet{zhu2023sample} where the learner does not observe $P(f)$, the type, or the action. Fresh types and outcomes make~\eqref{eq:conditional-mean} valid under adaptive policies.

The feedback is economically natural as the outcome category must be observed in order to settle an outcome-contingent contract. It is nevertheless stronger than observing only the realized scalar profit. The algorithm uses the one-hot vector as an unbiased estimate of the selected outcome law, so the result should not be read as a scalar bandit-feedback guarantee.

The principal's expected one-period utility is $u(f)=P(f)\cdot(v-f)$. A policy chooses $f_t$ from past contracts and outcomes, and its expected Stackelberg regret is
\[
    \mathfrak R_T=T\sup_{f\in[0,1]^m}u(f)
    -\mathbb E\!\left[\sum_{t=1}^T u(f_t)\right].
\]
The learner knows $m$, $v$, and the protocol, but none of the remaining primitives or the induced map $P$.

Let $\mathcal I_m$ be the class of instances above and define
$\mathfrak R_T^\star(m)=\inf_\pi\sup_{I\in\mathcal I_m}\mathfrak R_T(\pi,I)$ over adaptive outcome-feedback policies. When $m=1$, $P(f)=1$ and $u(f)=v_1-f$, so posting $f=0$ gives zero regret. Henceforth we study $m\ge2$ different outcomes.

The upper bound applies to the full class $\mathcal I_m$, including heterogeneous types and infinite action spaces, whereas the lower bound uses one type, finitely many actions, bounded nonnegative costs, and common deterministic tie-breaking. 
Constants $C_m<\infty$ may change from line to line and depend only on fixed
$m$. Throughout, a subscript $m$ on asymptotic notation allows the hidden
constant to depend on the fixed number of outcomes $m$. A tilde additionally
suppresses factors polylogarithmic in $T$. The dimension-only algorithm constant $B_m$ is fixed before learning and chosen sufficiently large for the concentration certificates below.

This asymmetry between the two theorem classes is intentional. The upper bound is robust to essentially all hidden primitives once the selected response is stationary, whereas the lower bound shows that neither heterogeneity, infinite action spaces, nor instance-dependent tie-breaking is responsible for the difficulty. The common hard-family structure will also make the information comparison transparent.

\section{Main Results}

The headline result is an exact fixed-dimension minimax exponent. The two theorems below use the same convention that $m$ is the total number of outcomes, including any null outcome. The upper theorem states the finite-scale guarantee needed to see every source of regret, while the lower theorem shows that the leading exponent survives on a sharply restricted subclass.

For a resolution parameter $\varepsilon\in(0,1)$, the upper-bound
policy uses a grid of $K_\varepsilon$ candidate contract-response
coordinates. Section~\ref{sec:algorithm} defines this grid formally.

\begin{theorem}[Fixed-scale upper bound for unrestricted contracts]\label{thm:main}
For every fixed $m\ge2$, there exists a finite constant $B_m$, depending only on $m$, such that, for every $T\ge2$ and $\varepsilon\in(0,1)$, Algorithms~\ref{alg:simulator}-\ref{alg:master} with confidence $\delta=T^{-2}$ use only $m$, $v$, $T$, and $\varepsilon$ and satisfy, on every instance in $\mathcal I_m$,
\[
    \mathfrak R_T\le C_m\big(T\varepsilon+L_T\sqrt{TK_\varepsilon}+K_\varepsilon\big),
\]
where $K_\varepsilon\le(1+3\sqrt{m-1}/\varepsilon)^{m-1}$ and $L_T=1+\log^3(16K_\varepsilon T^3)$. Consequently, $\varepsilon\asymp T^{-1/(m+1)}$ up to logarithmic factors gives $\mathfrak R_T\le\widetilde O_m(T^{m/(m+1)})$.
\end{theorem}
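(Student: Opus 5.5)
We will prove the bound by splitting regret into three terms that correspond to the three summands: graph approximation ($T\varepsilon$), master allocation ($L_T\sqrt{TK_\varepsilon}$), and burn-in ($K_\varepsilon$).

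\emph{Step 1: normalization.} We first invoke Lemma~\ref{lem:normalization} to restrict the benchmark supremum to contracts with $\min_j f_j=0$. After this restriction, we work in payment-difference coordinates $g\in[0,1]^{m-1}$ relative to a reference outcome. We write $Q(g)$ for the selected outcome law projected to these $m-1$ coordinates. Revealed preference then gives monotonicity. Summing the two incentive inequalities for contracts $g,g'$ type by type and taking expectations over $\theta$ yields $\langle Q(g)-Q(g'),g-g'\rangle\ge0$. Hence the graph $\{(g,Q(g))\}$ is a monotone set.

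\emph{Step 2: Minty parametrization.} We use the Minty coordinate $z=g+Q(g)$. By the standard Minty argument, monotonicity gives $\|g-g'\|^2+\|Q(g)-Q(g')\|^2\le\|z-z'\|^2$. So both $g$ and $Q(g)$ are $1$-Lipschitz functions of $z$ on the graph image. Since $u$ is an explicit bilinear function of $(g,Q(g))$ (namely $P\cdot(v-f)$), utility is $C_m$-Lipschitz in $z$. The coordinates $z$ lie in a known cube of side at most $2$ in $\mathbb R^{m-1}$. A grid of mesh $\varepsilon/\sqrt{m-1}$ in each coordinate therefore has $K_\varepsilon\le(1+3\sqrt{m-1}/\varepsilon)^{m-1}$ points, and some grid point $z^\star$ lies within $\varepsilon$ of the Minty coordinate of a near-optimal normalized contract. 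This step contributes $C_mT\varepsilon$.

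A grid point need not itself lie on the graph, however. This is why each grid point drives a simulator rather than a fixed contract.

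\emph{Step 3: simulator certificate.} Simulator $k$ seeks the contract $g$ solving $g+Q(g)=z_k$, i.e.\ the zero of the strongly monotone residual $F(g)=g+Q(g)-z_k$. It runs projected stochastic approximation $g_{s+1}=\Pi(g_s-\eta_s(g_s+X_s-z_k))$, using the one-hot outcome $X_s$ as an unbiased estimate of $Q(g_s)$. Strong monotonicity with modulus $1$ drives $\|g_s-g_k^\ast\|^2$ down at rate $O(\log s/s)$, where Freedman's inequality controls the martingale noise. The main obstacle sits here: $Q$ may be discontinuous, so we cannot control $u(g_s)$ through $g_s$ alone. We plan to bound the realized response instead. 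By the monotone-graph inequality from Step 2, $\|Q(g_s)-Q(g_k^\ast)\|^2\le\|F(g_s)\|^2+\text{cross terms}$, and we control the running average of $F(g_s)$ by the SA telescoping identity. With Freedman concentration at confidence $\delta$, this gives a uniform prefix certificate: with probability at least $1-\delta/K_\varepsilon$, for all $n$, $\sum_{s\le n}(u^{(k)}-u(g_s))\le C_m\sqrt n\,\log^{3}(\cdot)$. If $z_k$ is off-graph, the Minty resolvent still defines $g_k^\ast$, and its value is within $C_m\varepsilon$ of the target when $z_k$ is near $z^\star$.

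\emph{Step 4: rested UCB master.} A rested UCB master treats each simulator as an arm whose internal clock advances only when the arm is pulled. The prefix certificates therefore remain valid under adaptive allocation. The UCB indices add confidence radii $B_m L_T/\sqrt{n_k}$. The standard UCB analysis, with the optimistic comparison made against the certified simulator near $z^\star$, gives regret $C_m(L_T\sqrt{TK_\varepsilon}+K_\varepsilon)$ relative to that simulator's benchmark, where the $K_\varepsilon$ term comes from initial pulls. A union bound over the $K_\varepsilon$ simulators and $T$ prefixes at $\delta=T^{-2}$ yields the log factor $\log(16K_\varepsilon T^3)$, and failure events cost at most $T\delta\le1$.

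Summing the three contributions gives the stated bound, and choosing $\varepsilon\asymp T^{-1/(m+1)}$ balances $T\varepsilon$ against $\sqrt{T\varepsilon^{-(m-1)}}$ to give $\widetilde O_m(T^{m/(m+1)})$.
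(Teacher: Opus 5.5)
Your overall architecture (normalization, Minty coordinate $z=x+Q(x)$, a grid over a known cube, rested UCB competing with one certified simulator) matches the paper's. Step~3, however, has a genuine gap at exactly the obstacle you flag.

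You propose to bound the response error via $\|Q(g_s)-Q(g_k^\ast)\|_2^2\le\|F(g_s)\|_2^2+\cdots$ and then to control ``the running average of $F(g_s)$.'' These two steps do not connect. The stochastic-approximation telescope controls only a weighted vector average $\sum_s F(g_s)$. That says nothing about $\sum_s\|F(g_s)\|_2^2$ or $\sum_s\|Q(g_s)-Q(g_k^\ast)\|_2$. Near a response discontinuity the iterates straddle the boundary, and $Q(g_s)$ keeps jumping between the laws selected on either side. Only a time average of $Q(g_s)$ converges. For example, suppose the comparator's selected law is a strict convex combination of the laws selected on the two sides of a tie. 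Then $\|Q(g_s)-Q(g_k^\ast)\|_2$ can stay of order one at essentially every call.

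Relatedly, $Q(g_k^\ast)$ is the wrong benchmark. At a resolvent point $g_k^\ast$ the selected law can differ from $w_k=z_k-g_k^\ast$ by a full jump, so $u(\bar f(g_k^\ast))$ need not be within $C_m\varepsilon$ of the target. Only the virtual value $v_m-\lambda(g_k^\ast)+w_k\cdot(\bar v-g_k^\ast)$ is, by firm nonexpansiveness. Even defining $g_k^\ast\in\mathcal D$ requires a maximal monotone extension of the graph of $Q$ plus the normal cone of $\mathcal D$, which you do not construct.

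The missing idea is to avoid any norm bound on the response error and instead use that utility is affine in the response at fixed payment differences.
\begin{itemize}
\item Combining \eqref{eq:reduced-utility} with the identity $q^\star-q_n=-F_z(x_n)+(x_n-x^\star)-(z-z^\star)$, the paper writes the per-call loss as $F_z(x_n)\cdot(x_n-\bar v)+C_m\|x_n-x^\star\|_2+C_m\Delta$.
\item It then bounds $\sum_{n\le N}F_z(x_n)\cdot(x_n-\bar v)$ by applying the projection inequality of the same update to the feasible test point $\bar v\in\mathcal D$.
\item The weights $1/\eta_n=n+1$ increase, so the usual online-gradient telescope would give only $O(N)$. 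Instead the paper sums by parts around $a_\star=\|x^\star-\bar v\|_2^2$.
\item It uses the self-normalized (bootstrapped Freedman) tracking bound $\|x_n-x^\star\|_2\le C_m(\Delta+\iota/\sqrt n)$ to make every boundary and sum term $O(N\Delta+\sqrt N\,\iota)$.
\end{itemize}
Tracking the comparator $x^\star$ directly, with slack $\frac12\Delta^2$ in the strong-monotonicity inequality, also removes any need for resolvents.

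Separately, your Step~1 domain is too small. Differences relative to a fixed reference outcome range over $\mathcal D\subset[-1,1]^{m-1}$. Restricting to $[0,1]^{m-1}$ forces $f_m=0$, which can exclude every near-optimal contract (e.g.\ when outcome $m$ is the valuable one). The grid must cover $[-1,2]^{m-1}$.
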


\begin{proof}[Proof sketch]
    The upper bound proof has three ingredients.
    \begin{enumerate}
    \item \emph{Normalization without response invariance.} A common payment shift preserves each best-response set but can change the action chosen from a tie. A perturbation toward the value vector normalizes the benchmark supremum without assuming that selected responses themselves are shift invariant.
    \item \emph{Regularizing the selected-response graph.} Revealed preference makes the reduced outcome law monotone. The Minty transformation adds the contract coordinate to that law, making both components Lipschitz functions of the transformed graph coordinate despite discontinuity in payments.
    \item \emph{Learning an unknown graph.} The policy covers a known ambient cube, assigns a projected stochastic-approximation simulator to every grid point, and uses a rested-UCB master. One near-graph simulator has a uniform prefix certificate, and the master can find it without knowing its identity.
\end{enumerate}

The three terms have separate roles. The term $T\varepsilon$ is the price of approximating the optimal selected-response graph point, $L_T\sqrt{TK_\varepsilon}$ is the cost of allocating observations across rested simulators, and $K_\varepsilon$ activates the grid. %
Since a grid of resolution $\varepsilon$ has order
$\varepsilon^{-(m-1)}$ points, balancing the approximation cost
$T\varepsilon$ and the statistical allocation cost
$\widetilde O_m(\sqrt{T\varepsilon^{-(m-1)}})$ gives
$\varepsilon\asymp T^{-1/(m+1)}$ and regret
$T^{m/(m+1)}$.
See Sections~\ref{sec:geometry}-\ref{sec:proof-main} and the
appendices for the detailed proof.
\end{proof}

\begin{theorem}[Matching finite-outcome lower bound]\label{thm:lower}
For every fixed $m\ge2$, there is $c_m>0$ such that every adaptive outcome-feedback policy and every $T\ge1$ incur
$\mathfrak R_T\ge c_mT^{m/(m+1)}$ on some finite-action, single-type instance in $\mathcal I_m$. The hard family has public value $(1,\ldots,1,0)$, nonnegative costs bounded by one, the same finite action set and action-level outcome laws, and one common deterministic action-priority rule while each alternative changes only one action cost from a common baseline.
\end{theorem}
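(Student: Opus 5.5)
We plan to prove Theorem~\ref{thm:lower} with a needle-in-a-haystack construction and a change-of-measure argument. Let $\varepsilon\asymp T^{-1/(m+1)}$ and $N\asymp\varepsilon^{-(m-1)}$. We build a baseline instance $I_0$ and alternatives $I_1,\dots,I_N$. Each $I_i$ is guaranteed a profit improvement of order $\varepsilon$, and this improvement is attainable only in a region $R_i$ of contract space. The regions $R_i$ are pairwise disjoint. Moreover, outcome laws under $I_i$ and $I_0$ differ only for contracts in $R_i$, and there by a KL amount of order $\varepsilon^2$ per round. The standard averaging then finishes the proof. If $\mathbb E_0[n_i]$ denotes the expected number of rounds spent in $R_i$ under $I_0$, then $\sum_i\mathbb E_0[n_i]\le T$. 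Pinsker's inequality (or Bretagnolle--Huber) gives $|\mathbb E_i[n_i]-\mathbb E_0[n_i]|\le T\sqrt{\varepsilon^2\mathbb E_0[n_i]}$. Averaging over $i$ shows that for some $i$, $\mathbb E_i[n_i]\le T/N+T\varepsilon\sqrt{T/N}\le T/2$, which requires $\varepsilon^2T\lesssim N$, i.e.\ $\varepsilon^{m+1}T\lesssim1$. On that instance, regret is at least $\varepsilon(T-\mathbb E_i[n_i])\gtrsim\varepsilon T=T^{m/(m+1)}$, provided contracts outside $R_i$ lose at least order $\varepsilon$ relative to the benchmark under $I_i$.

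For the construction, the value is $v=(1,\dots,1,0)$, so outcome $m$ is the null outcome. The baseline has a small finite set of actions. One is a null action with cost $0$ that puts mass on outcome $m$. The others are "productive" actions whose laws and costs make the baseline optimum a reference contract $f^0$ with profit $u^\star$. The optimum should be robust: any contract inducing a baseline action earns at most $u^\star$, and any contract earns at most $u^\star$ under $I_0$. For each grid point $z_i$ in an $(m-1)$-dimensional cube of payment differences at spacing $\varepsilon$, we add a distinguished action $a_i$. Its outcome law is $p_{a_i}=p^\circ+\varepsilon$-tilts oriented toward $z_i$, placed so that $p_{a_i}\cdot f-c_{a_i}$ is maximized, relative to the incumbent response, near $z_i$. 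In the baseline, $c_{a_i}$ is large enough that $a_i$ is never selected, since the aggregate deficit is positive everywhere. In $I_i$, only $c_{a_i}$ is lowered by an amount of order $\varepsilon^2$. Then $a_i$ is selected exactly on the aggregate region $R_i=\{f:\ \text{total payoff deficit of }a_i\text{ below }\varepsilon^2\}$, which is a neighborhood of radius $\asymp\varepsilon$ around $z_i$. A common priority rule breaks ties, for instance favoring baseline actions on the boundary. The law $p_{a_i}$ differs from the baseline response law by $O(\varepsilon)$ in total variation, with all coordinates bounded away from $0$. This gives a per-round KL of $O(\varepsilon^2)$, exactly where the response changes.

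The key calculation is the aggregate treatment of the deficit. The deficit of $a_i$ is $\sum_j(p_{a_i,j}-p_{\text{base},j})(f_j-z_{i,j})$ plus a quadratic term. This quadratic term is obtained by choosing the baseline response itself to vary monotonically, in a piecewise-linear way, with $f$, so that it approximates a strictly convex agent envelope with curvature of order one. Lowering the cost by $\varepsilon^2$ then yields a ball-like region of radius $\varepsilon$, rather than a product of $m-1$ separate slabs whose losses would add up. Disjointness follows from the spacing $\varepsilon$ with suitable constants. The principal's profit from inducing $a_i$ at $z_i$ exceeds $u^\star$ by $\asymp\varepsilon$, because the tilt raises the expected value by order $\varepsilon$ while the payment increase is only of order $\varepsilon^2$. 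Nonnegativity of costs and the bound of one are kept by shifting all costs by a constant.

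The main obstacle is designing a baseline whose envelope is strictly convex with finitely many actions, while keeping two properties: baseline profit stays at most $u^\star$ everywhere, and every contract outside $R_i$ still loses order $\varepsilon$ under $I_i$. To handle this, we will first try a finite family of baseline actions indexed by the same grid, whose laws lie on a curved surface in the simplex, such as a quadratic in $z$. Each baseline action is costed so that its induced profit equals $u^\star$ minus a term of order $\varepsilon$ on its own cell. Under this choice the baseline is flat up to $O(\varepsilon)$ but never beneficial, and $a_i$ strictly improves on it only inside $R_i$.
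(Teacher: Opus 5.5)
\textbf{Verdict: there is a genuine gap, namely the hard instance itself.}

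Your information-theoretic half is sound and is essentially the paper's. You have disjoint response regions, per-visit KL of order $\varepsilon^2$ only inside the region, $\sum_i\mathbb E_0[n_i]\le T$, and averaging over $N\asymp\varepsilon^{-(m-1)}$ alternatives. Pinsker plus Jensen works as well as the paper's Bretagnolle--Huber bound on a uniformly sampled round. You also rightly insist on an aggregate rather than a product deficit region.

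The gap is that the hard instance, which is the substance of the theorem, is never built, and the pieces you sketch are not yet consistent. The load-bearing requirement is a finite-action baseline with three properties:
\begin{enumerate}
\item its principal profit is at most $u^\star$ everywhere;
\item the profit comes within a small constant times $\varepsilon$ of $u^\star$ near each of $\asymp\varepsilon^{-(m-1)}$ points spread over an order-one region;
\item neighbouring laws differ by $\Theta(\varepsilon)$ per $\varepsilon$-cell.
\end{enumerate}
This flatness is not a detail. Suppose the baseline profit varied at order one, as it generically does for, e.g., laws quadratic in the grid label with cells at the labels. Then only an $O(\sqrt{\varepsilon})$-neighbourhood of its maximiser is $O(\varepsilon)$-optimal. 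Only about $\varepsilon^{-(m-1)/2}$ alternatives could then beat $u^\star$ by $\Omega(\varepsilon)$, and the exponent drops.

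Your instruction to cost each action ``so that its profit is $u^\star$ minus order $\varepsilon$'' states the target without achieving it. The $O(\varepsilon)$ shortfall must also be beaten by the alternative's gain with explicit constants. Finally, laws of the form $p^\circ$ plus an $O(\varepsilon)$ tilt, with a common $p^\circ$, make the envelope gradient vary by only $O(\varepsilon)$ across the grid. An $\varepsilon^2$ cost cut would then open regions of radius $\sqrt{\varepsilon}$, not $\varepsilon$, which contradicts your own order-one curvature requirement.

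\textbf{The gain mechanism is misdescribed.} At a contract of order-one size, a tilt of size $\varepsilon$ changes the expected payment by order $\varepsilon$, not $\varepsilon^2$. The gain actually comes from the cost cut. Lowering one cost by $\Delta\asymp\varepsilon^2$ at a kink where adjacent slopes differ by $\Theta(\varepsilon)$ lets the principal reduce the payment by order $\varepsilon$ while still inducing that action. This beats $u^\star$ only because the baseline is exactly flat there.

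\textbf{How the paper closes the gap.} The paper supplies precisely this calibration through three choices.
\begin{enumerate}
\item Scalar ladders $\rho_k=[2(m-1)(1-k\varepsilon)]^{-1}$, with costs $\kappa_k$ chosen so that $\phi_{k-1}$ and $\phi_k$ cross at $k\varepsilon$. This gives $\rho_k(1-k\varepsilon)=\rho_0$, so every level earns exactly $\rho_0$ at the left end of its cell.
\item Joint actions over all multi-indices, which make the agent envelope a sum of scalar envelopes. This yields $u_0\le 1/2$, with the bound approached near every lattice point. It also reduces the region analysis to neighbour comparisons in each coordinate, giving box containment and disjointness over even multi-indices.
\item A distinguished action that is just the baseline joint action $a_l$, with cost lowered by $\Delta=\varepsilon^2/[16(m-1)]$. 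The contract $f_m=0$, $f_1=l_1\varepsilon-s$ with $s=\Delta/[2(\rho_{l_1}-\rho_{l_1-1})]$, and $f_i=l_i\varepsilon$ for $2\le i<m$, yields profit $1/2+\gamma_m\varepsilon$.
\end{enumerate}
No separate tilted actions are needed.
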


\begin{proof}[Proof sketch]
Each lower-bound alternative discounts one action, whose selection
depends on its aggregate payoff deficit across coordinates. The construction packs
$\Theta(\varepsilon^{-(m-1)})$ disjoint aggregate response regions, and a
visit to one region reveals only $O(\varepsilon^2)$ information about
its associated alternative. Averaged across the packing, the
information scale is $O_m(T\varepsilon^{m+1})$. Keeping it bounded forces
$\varepsilon\asymp T^{-1/(m+1)}$ and produces regret
$\Omega_m(T\varepsilon)=\Omega_m(T^{m/(m+1)})$.
See Section~\ref{sec:lower} and Appendix~\ref{app:lower} for the
detailed proof.
\end{proof}

The lower bound is witnessed by a deliberately parsimonious hard family. Every instance has one agent type and uses the same finite action set, action-level outcome laws, and deterministic tie-breaking priority. All costs are bounded and nonnegative, and each alternative changes only one action cost relative to a common baseline. The construction therefore pins the sharp $T^{m/(m+1)}$ exponent on the statistical difficulty of learning how a localized incentive perturbation changes contract-induced behavior from outcome feedback.

Combining Theorems~\ref{thm:main} and~\ref{thm:lower}, we obtain, for
every fixed $m\ge2$,
\begin{equation}\label{eq:headline-rate}
    \mathfrak R_T^\star(m)=\widetilde\Theta_m(T^{m/(m+1)}).
\end{equation}

We next discuss how the result improves the existing bounds and the scope of its operational interpretation.

\vspace{5pt}
\noindent\textbf{What closes the previous gap?}
For unrestricted contracts, \citet{zhu2023sample} give the valid general upper bound $\widetilde O(\sqrt m\,T^{1-1/(2m+1)})$. Their discretization covers directions and radii in payment space, whereas our policy covers the reduced selected-response graph. After normalization, this graph has $m-1$ effective coordinates, and the Minty transformation makes graph approximation control utility even though payment-space approximation does not. The resulting cover contributes order $\varepsilon^{-(m-1)}$, rather than the larger payment-space discretization that drives the previous exponent. This yields $T^{m/(m+1)}$ and, together with our aggregate-region multidimensional lower bound, closes the minimax exponent.

The improved upper bound requires both normalization and Minty
regularization. Normalization alone removes the common-payment direction but leaves a discontinuous objective, while Minty regularization alone, without normalization, would retain an unnecessary dimension and miss the sharp exponent. On the lower side, a coordinatewise product cell would not match this reduced geometry because payoff deficits add across coordinates. The aggregate-response regions repair that accumulation and make the same $m-1$ dimensions appear in the testing bound. Thus the upper and lower constructions identify one common geometric obstruction rather than producing coincidentally matching powers.

\vspace{5pt}
\noindent\textbf{Operational interpretation and scope.} \quad
Operationally, the rate prices the number of independently contractible outcome differences, not payment levels. The policy observes only the outcome category and is information-theoretic rather than dimension-free computationally that its grid is polynomial in $T$ for fixed $m$ but exponential in $m$. Structured responses or contract families may permit faster rates, but scalar reward-only feedback does not provide the outcome signal used here.

For an organization choosing how finely to classify performance, our theorems isolate a worst-case cost of granularity. Splitting an outcome category may improve the best contract when primitives are known, but it also adds a payment-difference direction that must be learned. The result quantifies the second effect and deliberately leaves the first, application-specific benefit to a separate model-selection problem.

\section{Normalization and Reduced Minty Geometry}\label{sec:geometry}

The principal's utility may jump at response boundaries, so proximity of two contracts alone says little about proximity of their profits. The optimizing behavior nevertheless restricts the direction of these jumps. We first extract that revealed-preference restriction, then remove the economically redundant common-payment direction, and finally use a Minty coordinate to turn graph proximity into utility proximity.

\begin{lemma}[Revealed-preference monotonicity]\label{lem:monotone}
For any $f,g\in[0,1]^m$, $(P(f)-P(g))\cdot(f-g)\ge0$.
\end{lemma}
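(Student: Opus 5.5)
The plan is the standard revealed-preference argument, applied type by type and then averaged over the type distribution. Fix a type $\theta$ and write $a=a_\theta(f)$ and $b=a_\theta(g)$ for the selected responses under Assumption~\ref{ass:selected}. Since $a$ maximizes $p_{\theta,\cdot}\cdot f-c_{\theta,\cdot}$ over $A_\theta$, comparing it with $b$ gives
\[
p_{\theta,a}\cdot f-c_{\theta,a}\ \ge\ p_{\theta,b}\cdot f-c_{\theta,b}.
\]
Symmetrically, because $b$ is optimal at $g$,
\[
p_{\theta,b}\cdot g-c_{\theta,b}\ \ge\ p_{\theta,a}\cdot g-c_{\theta,a}.
\]

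Adding the two inequalities cancels the costs $c_{\theta,a}$ and $c_{\theta,b}$. Rearranging what remains yields $(p_{\theta,a}-p_{\theta,b})\cdot(f-g)\ge0$. This step needs no finiteness of costs beyond their being real numbers, and no assumption on how ties are broken. Any selections from the argmax sets work, and in particular the fixed selector from Assumption~\ref{ass:selected}.

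Finally, take expectations over $\theta$. The map $(\theta,f)\mapsto p_{\theta,a_\theta(f)}$ is measurable by Assumption~\ref{ass:selected}, and it takes values in the bounded set $\Delta_m$, so all expectations are finite. By linearity,
\[
\mathbb E_\theta\bigl[(p_{\theta,a_\theta(f)}-p_{\theta,a_\theta(g)})\cdot(f-g)\bigr]=(P(f)-P(g))\cdot(f-g).
\]
The left side is the expectation of a nonnegative quantity, so it is nonnegative, which proves the lemma.

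No step is a real obstacle. The only points to check are integrability and measurability, which Assumption~\ref{ass:selected} supplies, and the observation that both types in the comparison are the same draw $\theta$. This holds because the inequality is pointwise in $\theta$ before averaging.
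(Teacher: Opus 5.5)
Your proposal is correct and follows the paper's proof: the paper also writes the two optimality inequalities for a fixed type at $f$ and at $g$, adds them to cancel the costs, and averages over types. Averaging only the bounded nonnegative product $(p_{\theta,a}-p_{\theta,b})\cdot(f-g)$ also means you never need the costs themselves to be integrable in $\theta$.
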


\begin{proof}
For a fixed type, let $(p_f,c_f)$ and $(p_g,c_g)$ be the outcome-cost pairs selected under $f$ and $g$. Optimality gives $p_f\cdot f-c_f\ge p_g\cdot f-c_g$ and $p_g\cdot g-c_g\ge p_f\cdot g-c_f$. Adding and averaging over types proves the claim.
\end{proof}

Lemma~\ref{lem:monotone} does not assert continuity. It says that response jumps are ordered in aggregate. The Minty transformation below converts precisely this weak order into metric control.

\begin{lemma}[Benchmark normalization]\label{lem:normalization}
For every instance satisfying Assumption~\ref{ass:selected},
\[
    \sup_{f\in[0,1]^m}u(f)=
    \sup_{f\in[0,1]^m:\,\min_jf_j=0}u(f),
\]
without requiring the selected response to be invariant under common payment shifts.
\end{lemma}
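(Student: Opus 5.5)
The inequality ``$\ge$'' is immediate, because the right-hand supremum ranges over a subset of $[0,1]^m$. For ``$\le$'', I will fix an arbitrary $f\in[0,1]^m$ with $s:=\min_jf_j>0$ and an arbitrary $\lambda\in(0,1)$. I will then build a contract $g_\lambda\in[0,1]^m$ with $\min_jg_{\lambda,j}=0$ and $u(g_\lambda)\ge(1-\lambda)u(f)-\lambda$. Letting $\lambda\downarrow0$ gives the claim.

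\emph{Construction.} Let $h=f-s\mathbf 1\in[0,1]^m$, and let $q_\lambda=(1-\lambda)h+\lambda v$, a convex combination of two points of the cube. Set $g_\lambda=q_\lambda-(\min_jq_{\lambda,j})\mathbf 1$. Since $0\le\min_jq_{\lambda,j}$ and $q_\lambda\in[0,1]^m$, we get $g_\lambda\in[0,1]^m$ with minimum coordinate zero. Two observations drive the argument.
\begin{itemize}
\item[(i)] The differences $g_\lambda-f$ and $\lambda(v-h)$ differ only by a multiple of $\mathbf 1$, and $(p'-p)\cdot\mathbf 1=0$ for any two outcome laws.
\item[(ii)] $p\cdot(v-h)=p\cdot(v-f)+s$ for every $p\in\Delta_m$.
\end{itemize}
I deliberately never compare the selections at $f$ and at $h$. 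Those selections may differ under a fixed tie-breaking rule, and that is exactly the non-invariance the lemma must avoid.

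\emph{Revealed-preference step (the key point).} Fix a type $\theta$. Let $(p,c)$ be the pair selected at $f$ and $(p',c')$ the pair selected at $g_\lambda$. Optimality at $f$ gives $p\cdot f-c\ge p'\cdot f-c'$, and optimality at $g_\lambda$ gives $p'\cdot g_\lambda-c'\ge p\cdot g_\lambda-c$. Adding the two, as in the proof of Lemma~\ref{lem:monotone}, yields $(p'-p)\cdot(g_\lambda-f)\ge0$. By (i) this is $\lambda(p'-p)\cdot(v-h)\ge0$, so
\[
p'\cdot(v-h)\ge p\cdot(v-h)=p\cdot(v-f)+s\ge p\cdot(v-f).
\]
Only the two selected actions enter this inequality, so no shift invariance of the selector is needed.

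\emph{Utility bound.} The principal's per-type payoff at $g_\lambda$ is
\[
p'\cdot(v-g_\lambda)=p'\cdot(v-q_\lambda)+\min_jq_{\lambda,j}\ge(1-\lambda)\,p'\cdot(v-h),
\]
which by the previous step is at least $(1-\lambda)\,p\cdot(v-f)$. Averaging over $\theta$, using the measurability in Assumption~\ref{ass:selected}, gives $u(g_\lambda)\ge(1-\lambda)u(f)\ge u(f)-\lambda$, because $|u|\le1$. Taking $\sup_f$ and then $\lambda\to0$ finishes the proof.

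The main obstacle is exactly the tie-breaking issue. The natural move of shifting $f$ to $h$ preserves every best-response set but may switch the selected tied action to one that is worse for the principal. The perturbation toward $v$ avoids this: it breaks any such switch in the principal's favour to first order, while costing only a factor $1-\lambda$.
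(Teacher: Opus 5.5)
Your proof is correct and follows essentially the paper's route. You perturb the shifted contract toward $v$, use revealed preference between the two selected actions together with unit mass of outcome laws to get $p'\cdot(v-h)\ge p\cdot(v-f)$, and then let the perturbation vanish. The only difference is that you renormalize by subtracting $\min_j q_{\lambda,j}$, whereas the paper subtracts $\eta v_j$ with $j$ minimizing $v_i$ over the zero coordinates of the shifted contract; the two contracts coincide for small $\lambda=\eta$, so your version just avoids choosing $j$ and requiring $\eta$ to be small.
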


\begin{proof}
Fix $f$, let $c=\min_i f_i$, and set $g=f-c\mathbf1$. Choose $j\in\argmin\{v_i:g_i=0\}$ and, for sufficiently small $\eta>0$, set $h_\eta=(1-\eta)g+\eta(v-v_j\mathbf1)$. Then $h_\eta\in[0,1]^m$ and $\min_i h_{\eta,i}=0$, coordinate $j$ is zero, other zero coordinates of $g$ are nonnegative by the choice of $j$, and positive coordinates remain nonnegative.

For each type, let $p_\theta$ be the law selected at $f$ and $q_\theta$ the law selected at $h_\eta$. The action selected at $f$ remains optimal at $g$ because $f-g=c\mathbf1$. Comparing it with the action selected at $h_\eta$ in both directions gives $(q_\theta-p_\theta)\cdot(h_\eta-g)\ge0$, hence $q_\theta\cdot(v-g)\ge p_\theta\cdot(v-g)$ because both laws have unit mass. Therefore,
\[
u(h_\eta)=(1-\eta)\mathbb E_\theta[q_\theta\cdot(v-g)]+\eta v_j
\ge(1-\eta)(u(f)+c)+\eta v_j.
\]
Letting $\eta\downarrow0$ and then taking the supremum over $f$ proves the nontrivial direction, and we immediately obtain the reverse.
\end{proof}

The perturbation is needed because subtracting $c\mathbf 1$ preserves the set of best responses but need not preserve the action selected from that set. The proof compares optimizers at two different contracts and takes a limit, so no shift-invariance property of the selector is used.

For $y\in\mathbb R^m$, we write $y_{-m}=(y_1,\ldots,y_{m-1})$.
For each contract $f$, define its payment-difference vector
relative to outcome $m$ by $x(f)=f_{-m}-f_m\mathbf1$.
Thus $x_i(f)=f_i-f_m$ is the payment for outcome $i$
minus the payment for the reference outcome $m$.
The range of $x(f)$ over contracts $f\in[0,1]^m$ is
\begin{equation}\label{eq:reduced-domain}
\mathcal D=\left\{x\in\mathbb R^{m-1}:
\max(0,x_1,\ldots,x_{m-1})
-\min(0,x_1,\ldots,x_{m-1})\le1\right\}.
\end{equation}
For $x\in\mathcal D$, let
$\lambda(x)=-\min(0,x_1,\ldots,x_{m-1})$ and define
\begin{equation}\label{eq:canonical-section}
    \bar f(x)=(x+\lambda(x)\mathbf1,\lambda(x)),\qquad
    \bar v=v_{-m}-v_m\mathbf1,\qquad
    Q(x)=\bigl(P(\bar f(x))\bigr)_{-m}.
\end{equation}
The shift $\lambda(x)$ subtracts the smallest entry of $(x,0)$
from every coordinate. By~\eqref{eq:reduced-domain}, the resulting
contract $\bar f(x)$ lies in $[0,1]^m$, has minimum payment zero,
and satisfies $x(\bar f(x))=x$. It is therefore the unique
normalized contract realizing the payment differences $x$.
The vector $\bar v$ expresses the principal's values relative
to the same reference outcome $m$, and $Q(x)$ records the first $m-1$ outcome
probabilities induced by $\bar f(x)$. The remaining probability
is determined by
$\bigl(P(\bar f(x))\bigr)_m=1-\mathbf1\cdot Q(x)$.

\begin{lemma}[Canonical reduction]\label{lem:reduction}
$\mathcal D$ is compact and convex.
{The map $f \mapsto x(f)$ restricts to a bijection from the normalized contracts onto $\mathcal D$, with inverse $\bar f(\cdot)$.}
Moreover, $\bar v\in\mathcal D$, $\lambda$ is $1$-Lipschitz, and for all $x,y\in\mathcal D$,
\begin{equation}\label{eq:reduced-monotonicity}
    (Q(x)-Q(y))\cdot(x-y)\ge0,
\end{equation}
while
\begin{equation}\label{eq:reduced-utility}
    u(\bar f(x))=v_m-\lambda(x)+Q(x)\cdot(\bar v-x).
\end{equation}
\end{lemma}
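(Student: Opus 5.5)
The plan is to verify each claim by direct computation, in the order stated, relying on Lemma~\ref{lem:monotone} for the monotonicity part. For compactness and convexity, write $\mathcal D=\{x:\phi(x)\le1\}$ with $\phi(x)=\max(0,x_1,\ldots,x_{m-1})+\max(0,-x_1,\ldots,-x_{m-1})$. Each summand is a maximum of linear functions, so $\phi$ is convex and continuous, and its sublevel set is closed and convex. Boundedness follows because $|x_i|\le\phi(x)\le1$ for every $i$.

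For the bijection, I first check that $\bar f(x)$ is a normalized contract with $x(\bar f(x))=x$. Its entries are the entries of $(x,0)$ shifted by $\lambda(x)=-\min(0,x)$. The smallest entry therefore becomes $0$ and the largest becomes $\phi(x)\le1$. Subtracting the last coordinate $\lambda(x)$ from the first $m-1$ coordinates returns $x$. Conversely, let $f$ be normalized. Since $f\in[0,1]^m$, the differences between any two coordinates of $f$ have absolute value at most $1$, so $x(f)\in\mathcal D$. The vector $f$ equals $(x(f),0)+f_m\mathbf1$, and the condition $\min_jf_j=0$ forces $f_m=-\min(0,x(f))=\lambda(x(f))$. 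Hence $f=\bar f(x(f))$, which establishes both injectivity and the inverse formula.

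For the next two claims, $\bar v=x(v)$ with $v\in[0,1]^m$, so $\bar v\in\mathcal D$ by the range statement above. The map $\lambda=\max(0,-x_1,\ldots,-x_{m-1})$ is $1$-Lipschitz in the sup norm, and therefore also in the Euclidean norm, because a maximum of $1$-Lipschitz functions is $1$-Lipschitz.

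For \eqref{eq:reduced-monotonicity}, apply Lemma~\ref{lem:monotone} with $f=\bar f(x)$ and $g=\bar f(y)$. Write $\bar f(x)-\bar f(y)=((x-y)+(\lambda(x)-\lambda(y))\mathbf1_{m-1},\,\lambda(x)-\lambda(y))$ and $d=P(\bar f(x))-P(\bar f(y))$. The inner product $d\cdot(\bar f(x)-\bar f(y))$ then equals $(Q(x)-Q(y))\cdot(x-y)+(\lambda(x)-\lambda(y))\,\mathbf1\cdot d$. The last term vanishes because both probability vectors have unit mass, which gives \eqref{eq:reduced-monotonicity}.

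For \eqref{eq:reduced-utility}, let $p=P(\bar f(x))$, so $p_{-m}=Q(x)$ and $p_m=1-\mathbf1\cdot Q(x)$. Then $v-\bar f(x)=(\bar v-x,0)+(v_m-\lambda(x))\mathbf1$. Using $p\cdot\mathbf1=1$ gives $u(\bar f(x))=Q(x)\cdot(\bar v-x)+v_m-\lambda(x)$.

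None of these steps is deep. The only place needing care is the unit-mass cancellation, which lets the monotonicity in $\mathbb R^m$ pass to the reduced coordinates even though the common shift $\lambda$ differs between $x$ and $y$.
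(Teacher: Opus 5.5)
Your proposal is correct and follows essentially the same route as the paper. You check the bijection and its inverse directly, pull monotonicity back from Lemma~\ref{lem:monotone} because the common shift $(\lambda(x)-\lambda(y))\mathbf1$ cancels against unit mass, and read off \eqref{eq:reduced-utility} from $p\cdot\mathbf1=1$; the only cosmetic difference is that you get compactness and convexity from a sublevel set of the convex function $\phi$, whereas the paper rewrites $\mathcal D$ as the polytope $\{x:-1\le x_i\le1,\ x_i-x_j\le1\}$.
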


\begin{proof}
Equivalently, $\mathcal D=\{x:-1\le x_i\le1,\ x_i-x_j\le1\ \forall i,j\}$, so it is a compact convex polytope.
{For every $x\in\mathcal D$, the contract $\bar f(x)$ is normalized and $x(\bar f(x))=x$. Conversely, if $f$ is normalized, then $f_m=\lambda(x(f))$ and $\bar f(x(f))=f$.}
Because
$\bar f(x)-\bar f(y)=(x-y,0)+(\lambda(x)-\lambda(y))\mathbf1$ and outcome laws have unit mass, Lemma~\ref{lem:monotone} yields~\eqref{eq:reduced-monotonicity}. The coordinate range of $v$ is at most one, giving $\bar v\in\mathcal D$, and it holds that the minimum function is $1$-Lipschitz in $\ell_\infty$. Finally, substituting
{$\bigl(P(\bar f(x))\bigr)_m=1-\mathbf1\cdot Q(x)$}
gives~\eqref{eq:reduced-utility}.
\end{proof}

Define the reduced Minty coordinate $z(x)=x+Q(x)\in[-1,2]^{m-1}$ and residual $F_z(x)=x+Q(x)-z$.

\begin{lemma}[Reduced Minty geometry]\label{lem:minty}
For every $z$ and $x,y\in\mathcal D$,
$(F_z(x)-F_z(y))\cdot(x-y)\ge\|x-y\|_2^2$. Moreover,
\[
\|x-y\|_2\le\|z(x)-z(y)\|_2,
\qquad \|Q(x)-Q(y)\|_2\le\|z(x)-z(y)\|_2,
\]
and $|u(\bar f(x))-u(\bar f(y))|\le C_m\|z(x)-z(y)\|_2$.
\end{lemma}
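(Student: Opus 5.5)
The plan is to derive everything from the reduced monotonicity~\eqref{eq:reduced-monotonicity} of Lemma~\ref{lem:reduction} together with the utility formula~\eqref{eq:reduced-utility}. First, the strong monotonicity of the residual. For any fixed $z$, the $z$ terms cancel, so $F_z(x)-F_z(y)=(x-y)+(Q(x)-Q(y))$. Taking the inner product with $x-y$ gives $\|x-y\|_2^2+(Q(x)-Q(y))\cdot(x-y)$. By~\eqref{eq:reduced-monotonicity} the second term is nonnegative, which yields the first claim.

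Next, the two Lipschitz bounds, which are the standard Minty-parametrization argument. Write $a=x-y$ and $b=Q(x)-Q(y)$, so that $z(x)-z(y)=a+b$ and $a\cdot b\ge0$. Expanding,
\[
\|z(x)-z(y)\|_2^2=\|a\|_2^2+2a\cdot b+\|b\|_2^2\ge\|a\|_2^2+\|b\|_2^2 .
\]
This dominates both $\|a\|_2^2$ and $\|b\|_2^2$ separately, which gives both inequalities at once. It also shows that $z$ is injective on $\mathcal D$, so $x$ and $Q(x)$ are genuinely functions of the Minty coordinate, each $1$-Lipschitz. This is the sense in which the graph is regularized.

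Finally, the utility bound. Using~\eqref{eq:reduced-utility}, decompose
\[
u(\bar f(x))-u(\bar f(y))=-(\lambda(x)-\lambda(y))+(Q(x)-Q(y))\cdot(\bar v-x)-Q(y)\cdot(x-y).
\]
Each term is controlled as follows.
\begin{itemize}
\item[(i)] $\lambda$ is $1$-Lipschitz in $\ell_\infty$ by Lemma~\ref{lem:reduction}, so $|\lambda(x)-\lambda(y)|\le\|x-y\|_\infty\le\|x-y\|_2$.
\item[(ii)] Both $\bar v$ and $x$ lie in $\mathcal D\subseteq[-1,1]^{m-1}$, so $\|\bar v-x\|_2\le2\sqrt{m-1}$. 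Cauchy--Schwarz then bounds the second term by $2\sqrt{m-1}\,\|Q(x)-Q(y)\|_2$.
\item[(iii)] $Q(y)$ is a subvector of a probability vector, so $\|Q(y)\|_2\le1$ and the third term is at most $\|x-y\|_2$.
\end{itemize}
Combining these with the previous step gives $|u(\bar f(x))-u(\bar f(y))|\le(2+2\sqrt{m-1})\|z(x)-z(y)\|_2$, so one can take $C_m=2+2\sqrt{m-1}$.

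None of these steps is a real obstacle once~\eqref{eq:reduced-monotonicity} is in hand. The main thing to check is that the monotonicity was established for $Q$ in the reduced coordinates. This is not automatic, because the normalization shift $\lambda$ differs between $x$ and $y$. Lemma~\ref{lem:reduction} already handled it via the unit-mass identity, so I would simply cite it. The only other care needed is keeping the constant dependent on $m$ alone, through the $\ell_2$ diameter of $\mathcal D$.
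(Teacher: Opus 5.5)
Your proof is correct and follows the paper's argument essentially step for step. You cancel $z$ and invoke the reduced monotonicity \eqref{eq:reduced-monotonicity}, expand $\|z(x)-z(y)\|_2^2$ using $(Q(x)-Q(y))\cdot(x-y)\ge0$ to get both distance bounds and injectivity, and split the utility difference via \eqref{eq:reduced-utility} using the Lipschitzness of $\lambda$, $\|\bar v-x\|_2\le2\sqrt{m-1}$, and $\|Q(y)\|_2\le1$; your explicit $C_m=2+2\sqrt{m-1}$ is a valid choice of the paper's unspecified constant.
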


\begin{proof}
The first claim follows from $F_z(x)-F_z(y)=(x-y)+(Q(x)-Q(y))$ and~\eqref{eq:reduced-monotonicity}. Expanding
$\|z(x)-z(y)\|_2^2=\|x-y\|_2^2+\|Q(x)-Q(y)\|_2^2+2(Q(x)-Q(y))\cdot(x-y)$ proves both distance inequalities and injectivity of $z(\cdot)$ on the selected-response graph. Finally,~\eqref{eq:reduced-utility}, Lipschitzness of $\lambda$, $\|\bar v-x\|_2\le2\sqrt{m-1}$, and $\|Q(y)\|_2\le1$ give
$|u(\bar f(x))-u(\bar f(y))|\le C_m(\|x-y\|_2+\|Q(x)-Q(y)\|_2)$ and applying the distance bounds finishes the proof.
\end{proof}

The lemma controls utility along the selected-response graph shown in Figure~\ref{fig:minty-geometry}, not as a continuous function of the posted contract. The next section learns this graph without observing it.

\begin{figure}[t]
  \centering
  \resizebox{\linewidth}{!}{%

\begin{tikzpicture}[
    x=0.82cm,
    y=0.82cm,
    >=Latex,
    axis/.style={black, line width=0.55pt, -{Latex[length=2.2mm]}},
    graph/.style={black, line width=1.05pt},
    guide/.style={black!60, line width=0.45pt, dashed},
    measure/.style={black, line width=0.55pt, <->},
    point/.style={circle, fill=black, inner sep=1.8pt},
    every node/.style={font=\small},
    note/.style={font=\scriptsize, align=center}
]

\begin{scope}
  \node[anchor=west,font=\small\bfseries] at (0.25,6.05)
    {(a) Payment-difference coordinate};

  \draw[axis] (0.65,0.95) -- (7.35,0.95) node[below left=1pt] {$x$};
  \draw[axis] (0.65,0.95) -- (0.65,5.55);
  \node[rotate=90] at (0.10,3.20) {$Q(x)$};

  \draw[graph] (1.05,1.85) -- (3.92,1.85);
  \draw[graph] (4.08,4.55) -- (6.95,4.55);

  \coordinate (qm) at (3.68,1.85);
  \coordinate (qp) at (4.32,4.55);
  \node[point] at (qm) {};
  \node[point] at (qp) {};
  \draw[guide] (3.68,0.95) -- (qm);
  \draw[guide] (4.32,0.95) -- (qp);
  \node[below] at (3.68,0.95) {$x^{-}$};
  \node[below] at (4.32,0.95) {$x^{+}$};

  \draw[measure] (3.68,1.25) -- (4.32,1.25);
  \node[note,fill=white,inner sep=1pt] at (4.00,1.25)
    {small $|x^{+}-x^{-}|$};

  \draw[measure] (6.40,1.85) -- (6.40,4.55);
  \node[note] at (5.25,3.20)
    {large\\$|Q(x^{+})-Q(x^{-})|$};

  \node[note,align=left,anchor=west] at (1.00,5.25)
    {Nearby contracts can induce\\widely separated outcome laws.};
\end{scope}

\begin{scope}[shift={(9.15,0)}]
  \node[anchor=west,font=\small\bfseries] at (0.25,6.05)
    {(b) Reduced Minty coordinate $z(x)=x+Q(x)$};

  \draw[axis] (0.65,0.95) -- (7.35,0.95) node[below left=1pt] {$x$};
  \draw[axis] (0.65,0.95) -- (0.65,5.55);
  \node[rotate=90] at (0.10,3.20) {$z(x)$};

  \draw[graph] (1.05,1.40) -- (3.92,2.55);
  \draw[graph] (4.08,3.85) -- (6.95,5.00);

  \coordinate (zm) at (3.68,2.454);
  \coordinate (zp) at (4.32,3.946);
  \node[point] at (zm) {};
  \node[point] at (zp) {};
  \draw[guide] (3.68,0.95) -- (zm);
  \draw[guide] (4.32,0.95) -- (zp);
  \node[below] at (3.68,0.95) {$x^{-}$};
  \node[below] at (4.32,0.95) {$x^{+}$};

  \draw[measure] (5.10,2.454) -- (5.10,3.946);
  \node[note,anchor=west] at (5.20,3.20)
    {response jump\\creates a $z$-gap};

  \node[note,align=left,anchor=west] at (1.00,5.25)
    {Nearby graph points in $z$ control\\both contract and response.};

  \node[note] at (4.00,0.05)
    {$|\Delta x|,\ |\Delta Q|\le |\Delta z|
      \;\Longrightarrow\; |\Delta u|\le C_m|\Delta z|$};
\end{scope}

\end{tikzpicture}
  }
\caption{Binary illustration of the reduced selected-response graph and its
Minty coordinate $z=x+Q(x)$.}
  \label{fig:minty-geometry}
\end{figure}

\section{The Minty-Coordinate Algorithm}\label{sec:algorithm}

Lemma~\ref{lem:minty} shows that proximity in Minty coordinates
controls utility differences along the selected-response graph.
However, the outcome response map $P$, and hence the reduced
map $Q$, is unknown. The learner therefore cannot directly
determine which Minty coordinates are attained or recover
the corresponding contracts.

The algorithm has two levels. At the first level, a simulator
posts contracts for a fixed candidate Minty coordinate $z$
and updates its payment differences using observed outcomes.
If the Minty coordinate of a near-optimal normalized
contract were known, a single simulator would suffice.
To search for such a coordinate, the algorithm discretizes
the known cube $[-1,2]^{m-1}$ and maintains one simulator
for each grid point. At the second level, the master allocates rounds
among these simulators, selecting one in each round based on
the rewards observed so far.

We fix $\varepsilon\in(0,1)$, partition each coordinate interval
$[-1,2]$ into $\lceil3\sqrt{m-1}/\varepsilon\rceil$ equal cells,
and let $\mathcal Z_\varepsilon$ be the Cartesian product of
their centers. Every point in $[-1,2]^{m-1}$ is within
distance $\varepsilon/2$ of this grid, and
\[
    K_\varepsilon:=|\mathcal Z_\varepsilon|
    \le
    \left(1+\frac{3\sqrt{m-1}}{\varepsilon}\right)^{m-1}.
\]
The grid can be constructed without knowing $Q$. It
approximates every attained Minty coordinate, although some
grid points may not equal $x+Q(x)$ for any $x\in\mathcal D$.

Fix a grid point $z\in\mathcal Z_\varepsilon$ and consider its
simulator. The superscript $z$ identifies the simulator, and
the subscript $n$ indexes its calls. Let $x_n^z\in\mathcal D$
be its payment-difference vector used on its $n$-th call.
Each simulator starts at $x_1^z=0$, corresponding to the
feasible zero-payment contract $\bar f(0)=0$.
On call $n$, it posts the normalized contract $\bar f(x_n^z)$
defined in~\eqref{eq:canonical-section} and records the observed
outcome as the one-hot vector $X_n^z$.
We denote the first $m-1$ entries of $X_n^z$ as $X_{n,-m}^z$.
By~\eqref{eq:conditional-mean} and the definition of $Q$
in~\eqref{eq:canonical-section}, this vector has conditional
mean $Q(x_n^z)$. Hence,
\[
    \mathbb E[x_n^z+X_{n,-m}^z-z\mid x_n^z]
    =x_n^z+Q(x_n^z)-z
    =F_z(x_n^z).
\]
When $z$ is attained, strong monotonicity
(Lemma~\ref{lem:minty}) makes the negative residual point
toward its corresponding payment-difference vector.
Algorithm~\ref{alg:simulator} subtracts this residual estimate,
scaled by $1/(n+1)$, from $x_n^z$. It then applies the Euclidean
projection $\Pi_{\mathcal D}$ onto $\mathcal D$ to keep the next
payment-difference vector feasible.

\begin{algorithm}[t]
\caption{Minty-coordinate simulator for grid point $z$}\label{alg:simulator}
\textbf{Input:} public value vector $v$ and grid point $z\in\mathcal{Z}_\varepsilon$.\\
\textbf{Initialization:} $x_1^z=0\in\mathcal D$.\\
\textbf{State:} internal call count $n$ and current difference coordinate $x_n^z$.\\
\textbf{Output on each call:} posted contract, realized reward, and updated simulator state.\\
On the $n$-th call to simulator $z$:
\begin{enumerate}[leftmargin=2em]
\item Post the canonical contract $\bar f(x_n^z)$.
\item Observe one-hot outcome $X_n^z\in\{0,1\}^m$ with $\sum_j X_{n,j}^z=1$.
\item Receive reward $Y_n^z=X_n^z\cdot (v-\bar f(x_n^z))$.
\item Update
\[
    x_{n+1}^z
    =\Pi_{\mathcal D}\left(x_n^z-\frac{1}{n+1}(x_n^z+X_{n,-m}^z-z)\right).
\]
\end{enumerate}
\end{algorithm}

The master in Algorithm~\ref{alg:master} selects a simulator
using an upper confidence bound (UCB) index. For a simulator
that has been called $n_z\ge1$ times, the index is its average
observed reward plus a bonus. An uncalled simulator is assigned
index $+\infty$. In each global round, the master
calls the simulator with the largest index. The simulators
are rested, so only the selected simulator advances its call
count, updates its contract, and records a new reward.
All other simulators remain paused.

A simulator's expected reward can change as its contract is
updated. For a grid point close to a normalized contract's
Minty coordinate, Section~\ref{sec:proof-main} bounds the
simulator's average utility loss relative to that contract
with high probability. The bound consists of a grid
approximation error and a term that decreases with the number
of calls. The bonus covers the decreasing term and sampling noise.

\begin{algorithm}[t]
\caption{Rested-UCB master}\label{alg:master}
\textbf{Input:} value vector $v$, scale $\varepsilon$, grid $\mathcal{Z}_\varepsilon$, horizon $T$, confidence level $\delta\in(0,1)$, and the fixed dimension-only design constant $B_m$.\\
\textbf{Output:} the sequence of $T$ posted contracts generated by selected simulators.\\
Let $\iota=1+\log^3(16K_\varepsilon T/\delta)$.
Initialize $n_z=0$ for every $z\in\mathcal{Z}_\varepsilon$.\\
For rounds $t=1,\ldots,T$:
\begin{enumerate}[leftmargin=2em]
\item For each $z\in\mathcal{Z}_\varepsilon$, if $n_z=0$ set $I_z=+\infty$. If $n_z\ge1$, set
\[
\begin{aligned}
        \widehat\mu_z(n_z)&=\frac{1}{n_z}\sum_{k=1}^{n_z}Y_k^z,\qquad
        I_z=\widehat\mu_z(n_z)+B_m\frac{\iota}{\sqrt{n_z}}.
\end{aligned}
\]
\item Let $z_t$ be the lexicographically first element of $\argmax_{z\in\mathcal{Z}_\varepsilon} I_z$.
\item Call simulator $z_t$ once according to Algorithm~\ref{alg:simulator}, using internal call index $n_{z_t}+1$.
\item Set $n_{z_t}\leftarrow n_{z_t}+1$, leaving all other simulator states unchanged.
\end{enumerate}
\end{algorithm}

A direct implementation stores $O(mK_\varepsilon)$ numbers. Projection onto $\mathcal D$ is a convex quadratic program over a known fixed-dimensional polytope and requires no instance-specific information. The master can select the largest index by scanning the $K_\varepsilon$ simulators.

\section{Proof of Theorem~\ref{thm:main}}\label{sec:proof-main}

We prove the theorem using two lemmas.
Lemma~\ref{lem:prefix} bounds a simulator's cumulative utility
loss relative to a normalized contract in terms of the distance
between their Minty coordinates.
Lemma~\ref{lem:ucb-accounting} combines this bound with reward
concentration to control the master's total regret.
We apply these lemmas to a near-optimal normalized contract
and then choose the grid resolution.

\begin{lemma}[Prefix regret of a Minty simulator]\label{lem:prefix}
Fix a candidate Minty coordinate $z\in[-1,2]^{m-1}$
and a comparator $x^\star\in\mathcal D$.
Let $z^\star=x^\star+Q(x^\star)$.
Run the update in Algorithm~\ref{alg:simulator} with this
candidate $z$, and write $x_n$ for $x_n^z$, the
payment-difference vector used on its $n$-th call.
For every $T\ge2$ and $\delta\in(0,1)$, with probability
at least $1-\delta$,
\[
\begin{aligned}
    \sum_{n=1}^N\big(u(\bar f(x^\star))-u(\bar f(x_n))\big)
    &\le C_m\left(N\|z-z^\star\|_2+\sqrt N\,\iota\right)
\end{aligned}
\]
simultaneously for all $1\le N\le T$, where $\iota=1+\log^3(8T/\delta)$.
\end{lemma}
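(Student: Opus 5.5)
The plan is to compare the simulator directly with the comparator $x^\star$ rather than with a zero of $F_z$. A zero of $F_z$ need not exist, because $Q$ is monotone but not maximal or continuous. The key identity is $F_z(x^\star)=z^\star-z$, which is used in the strong-monotonicity inequality of Lemma~\ref{lem:minty}. Write $g_n=x_n+X_{n,-m}-z$ and $\xi_n=X_{n,-m}-Q(x_n)$. By \eqref{eq:conditional-mean}, $\xi_n$ is a martingale difference with $\|\xi_n\|_2\le\sqrt2$. Every $g_n$ is bounded by a constant $C_m$, since $x_n\in\mathcal D$ and $z\in[-1,2]^{m-1}$.

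\emph{Step 1 (iterate tracking).} Nonexpansiveness of $\Pi_{\mathcal D}$, together with $x^\star\in\mathcal D$ and $\eta_n=1/(n+1)$, gives
\[
\|x_{n+1}-x^\star\|_2^2\le\|x_n-x^\star\|_2^2-2\eta_n F_z(x_n)\cdot(x_n-x^\star)-2\eta_n\xi_n\cdot(x_n-x^\star)+C_m\eta_n^2 .
\]
Lemma~\ref{lem:minty} yields $F_z(x_n)\cdot(x_n-x^\star)\ge\|x_n-x^\star\|_2^2+(z^\star-z)\cdot(x_n-x^\star)$. The cross term is at most $\tfrac12\|x_n-x^\star\|_2^2+\tfrac12\|z-z^\star\|_2^2$. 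This leaves a contraction with factor $1-\eta_n$, a drift term $\eta_n\|z-z^\star\|_2^2$, and a martingale term. I will multiply by $n+1$ and unroll the recursion. A Freedman-type bound \citep{freedman1975tail} on the weighted martingale, with a union bound over $n\le T$ and the usual peeling, then gives, with probability $1-\delta/2$, simultaneously for all $n\le T$,
\[
\|x_n-x^\star\|_2^2\le C_m\bigl(\|z-z^\star\|_2^2+\iota/n\bigr).
\]
Summing square roots gives $\sum_{n\le N}\|x_n-x^\star\|_2\le C_m(N\|z-z^\star\|_2+\sqrt N\,\iota)$.

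\emph{Step 2 (utility decomposition).} By \eqref{eq:reduced-utility},
\[
u(\bar f(x^\star))-u(\bar f(x_n))=\bigl(\lambda(x_n)-\lambda(x^\star)\bigr)+Q(x_n)\cdot(x_n-x^\star)-\bigl(Q(x_n)-Q(x^\star)\bigr)\cdot(\bar v-x^\star).
\]
The first two terms are at most $C_m\|x_n-x^\star\|_2$, because $\lambda$ is 1-Lipschitz and $\|Q\|_2\le1$. Step 1 controls their sum. The last term cannot be handled pointwise, since $Q$ can jump. Instead I will control its \emph{sum} through the averaging structure of the $1/(n+1)$ step. Without projection, the update reads $(n+1)x_{n+1}=nx_n+z-X_{n,-m}$, which telescopes to
\[
\sum_{n\le N}X_{n,-m}=Nz-(N+1)x_{N+1}+x_1 .
\]
Hence
\[
\sum_{n\le N}\bigl(Q(x_n)-Q(x^\star)\bigr)=N(z-z^\star)+N(x^\star-x_{N+1})-x_{N+1}-\sum_{n\le N}\xi_n .
\]
Pairing this with the fixed vector $\bar v-x^\star$, whose norm is at most $2\sqrt{m-1}$, gives the bound $C_m(N\|z-z^\star\|_2+N\|x_{N+1}-x^\star\|_2+\sqrt{N\iota})$. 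The second term is $O(N\|z-z^\star\|_2+\sqrt{N\iota})$ by Step 1. The martingale sum is handled by Azuma's inequality, uniformly over $N\le T$, with the remaining $\delta/2$.

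\emph{Main obstacle: the projection.} With $\Pi_{\mathcal D}$ active, the telescoping identity acquires correction terms $\sum_n(n+1)r_n$. Here $r_n=y_{n+1}-x_{n+1}$ is the projection residual of the unprojected step $y_{n+1}$, and it lies in the normal cone of $\mathcal D$ at $x_{n+1}$. My first attempt is to use the normal-cone inequality $r_n\cdot(x^\star-x_{n+1})\le0$ and to show that these corrections enter the loss with a favorable sign or are small. Two facts should help. First, $\|r_n\|_2\le\eta_n\|g_n\|_2$. Second, $\bar v\in\mathcal D$, so the relevant direction $\bar v-x^\star$ points into $\mathcal D$. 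To exploit these, I will pair the correction with $\bar v-x^\star=(\bar v-x_{n+1})+(x_{n+1}-x^\star)$. The first piece has a nonpositive inner product with $r_n$. The second piece is bounded by $\|r_n\|_2(n+1)\|x_{n+1}-x^\star\|_2\le C_m\|x_{n+1}-x^\star\|_2$, which Step 1 again controls after summation. If that sign bookkeeping fails, the fallback is to absorb the residuals into the Step 1 recursion, where the projection only helps. Combining Steps 1 and 2 on the intersection of the two high-probability events yields the stated bound uniformly in $N$.
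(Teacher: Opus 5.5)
Your argument is correct. Step~1 is the same as the paper's tracking argument: strong monotonicity against $F_z(x^\star)=z^\star-z$, Young's inequality, projection nonexpansiveness, and Freedman after multiplying by $n+1$.

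Make one point explicit there. The unrolled martingale $\sum_{k\le n}\xi_k\cdot(x_k-x^\star)$ has conditional variance of order $S_n=\sum_{k\le n}\|x_k-x^\star\|_2^2$. To reach $\|x_n-x^\star\|_2^2\le C_m(\|z-z^\star\|_2^2+\iota/n)$, you must close a quadratic inequality in $S_n$, as the paper does. A plain Azuma bound would give only $\sqrt{\iota/n}$ and a prefix loss of order $N^{3/4}$.

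Step~2 is where you genuinely depart from the paper.
\begin{itemize}
\item The paper writes the loss through $F_z(x_n)\cdot(x_n-\bar v)$. It controls the prefix sum with the projected-gradient three-point inequality against $\bar v$, the potential $\|x_n-\bar v\|_2^2$, and summation by parts with weights $n+1$. Projection is absorbed automatically by nonexpansiveness.
\item You isolate $-(Q(x_n)-Q(x^\star))\cdot(\bar v-x^\star)$ and pair the summed vector with a fixed direction. You then use the exact Ces\`aro identity of the $1/(n+1)$ step,
\[
\sum_{n\le N}X_{n,-m}=Nz+x_1-(N+1)x_{N+1}-\sum_{n\le N}(n+1)r_n .
\]
\end{itemize}

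Your first attempt at the projection already closes, so drop the vague fallback. The correction enters the loss as $+\sum_n(n+1)r_n\cdot(\bar v-x^\star)$. The piece $r_n\cdot(\bar v-x_{n+1})$ is nonpositive, because $r_n$ lies in the normal cone at $x_{n+1}$ and $\bar v\in\mathcal D$. The remaining piece is at most $\|g_n\|_2\|x_{n+1}-x^\star\|_2$, which Step~1 sums.

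Both routes use $\bar v\in\mathcal D$ at the same point and the same last-iterate bound on $N\|x_{N+1}-x^\star\|_2$.
\begin{itemize}
\item Your route shows directly that each simulator's empirical outcome frequencies are pinned to $z-x_{N+1}$ up to projection corrections. It needs only a fixed-direction Azuma bound for the second martingale. However, it depends on the exact step size $1/(n+1)$ and requires explicit bookkeeping of projection residuals.
\item The paper's potential argument is the standard online-gradient template. It handles projection without tracking residuals and would tolerate other weight sequences through summation by parts.
\end{itemize}
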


The bound separates the error induced by approximating $z^\star$
with $z$ from the error accumulated while updating the contract.

For the analysis, we generate a sequence of $T$ calls for each
simulator, with outcomes drawn according to its posted
contracts. Whenever the master selects a simulator, it
reveals the next outcome in that sequence.
Since unselected simulators remain paused, this construction
has the same distribution as the original interaction.
The bounds below therefore apply to every prefix, including
those that the master does not observe.

\begin{lemma}[Rested-UCB accounting for the master]\label{lem:ucb-accounting}
Run Algorithm~\ref{alg:master} on a finite grid $\mathcal Z$ of size $K$ for $T$ rounds. Fix a comparator $x^\star\in\mathcal D$, an approximation scale $\varepsilon\ge0$, and an error scale $\iota\ge1$. Suppose that, on some event, the following two certificates hold.
\begin{enumerate}
    \item There is a special simulator $z^\circ$ such that, for every $1\le n\le T$,
    \[
        \frac1n\sum_{k=1}^n u(\bar f(x_k^{z^\circ}))
        \ge u(\bar f(x^\star))-C_m\left(\varepsilon+\frac{\iota}{\sqrt n}\right).
    \]
    \item For every simulator $z$ and every $1\le n\le T$,
    \[
        \left|\widehat\mu_z(n)-\frac1n\sum_{k=1}^n u(\bar f(x_k^z))\right|
        \le C_m\frac{\iota}{\sqrt n}.
    \]
\end{enumerate}
Let $\beta(n)$ be the actual bonus in the index $I_z(n)=\widehat\mu_z(n)+\beta(n)$ for a simulator with $n\ge1$ previous calls. Suppose the design constant in Algorithm~\ref{alg:master} is chosen so that, for every $1\le n\le T$, this bonus dominates the sum of the two $n^{-1/2}$ error terms in the certificates above and also satisfies $\beta(n)\le C_m\iota/\sqrt n$. Then, on the same event,
\[
    \sum_{t=1}^T\big(u(\bar f(x^\star))-u(f_t)\big)
    \le C_m\left(T\varepsilon+\iota\sqrt{KT}+K\right).
\]
\end{lemma}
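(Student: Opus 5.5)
The plan is the standard rested-bandit optimism argument, with the two certificates playing the roles of the reward-gap and concentration bounds. Write $u^\star=u(\bar f(x^\star))$ and $\bar u_z(n)=\frac1n\sum_{k=1}^n u(\bar f(x_k^z))$. Let $N_z$ be the final call count of simulator $z$, so that $\sum_z N_z=T$. Because simulators are rested, the realized rewards of $z$ are exactly $u(\bar f(x_1^z)),\dots,u(\bar f(x_{N_z}^z))$. Hence
\[
\sum_{t=1}^T\big(u^\star-u(f_t)\big)=\sum_{z}N_z\big(u^\star-\bar u_z(N_z)\big).
\]
The $K$ term absorbs the first call of every simulator, since each summand is at most a constant $C_m$: utilities lie in $[-1,1]$.

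\textbf{Step 1 (optimism of $z^\circ$).} At any round $t$ where $z^\circ$ has $n\ge1$ previous calls, certificates 1 and 2 and the domination hypothesis on $\beta$ give
\[
I_{z^\circ}=\widehat\mu_{z^\circ}(n)+\beta(n)\ge \bar u_{z^\circ}(n)-C_m\iota/\sqrt n+\beta(n)\ge u^\star-C_m\varepsilon .
\]
If $n=0$, the index is $+\infty$. Either way, the selected index satisfies $I_{z_t}\ge u^\star-C_m\varepsilon$.

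\textbf{Step 2 (per-simulator accounting).} Fix $z$ with $N_z\ge2$ and look at the last round at which it was selected, when it had $n=N_z-1\ge1$ previous calls. By Step 1,
\[
u^\star-C_m\varepsilon\le I_z(n)\le \bar u_z(n)+C_m\iota/\sqrt n+\beta(n)\le \bar u_z(n)+C_m\iota/\sqrt n ,
\]
using certificate 2 and $\beta(n)\le C_m\iota/\sqrt n$. Thus $n(u^\star-\bar u_z(n))\le C_m(n\varepsilon+\iota\sqrt n)$. The final call adds at most $C_m$, so
\[
N_z\big(u^\star-\bar u_z(N_z)\big)\le C_m\big(N_z\varepsilon+\iota\sqrt{N_z}+1\big).
\]

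\textbf{Step 3 (summing).} Summing over $z$ and using $\sum_z N_z=T$ together with Cauchy--Schwarz, $\sum_z\sqrt{N_z}\le\sqrt{KT}$, gives
\[
C_m\big(T\varepsilon+\iota\sqrt{KT}+K\big).
\]
Simulators with $N_z\le1$ contribute at most $C_m$ each, which falls into the $K$ term.

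\textbf{Main obstacle.} The delicate point is Step 2: the index comparison must be made at the last selection time. There the UCB inequality relates the cumulative prefix average at exactly $N_z-1$ calls to $u^\star$. This works only because the certificates hold uniformly over all prefixes $n\le T$, which the coupling construction guarantees; without that uniformity, the adaptive, data-dependent stopping index $N_z$ would break the argument.
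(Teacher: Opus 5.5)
Your proposal is correct and follows essentially the same argument as the paper. Both proofs use the same three steps. First, optimism of $z^\circ$ follows from the two certificates and the lower bound on the bonus. Second, inspecting each simulator's index just before its last call bounds its prefix average at $N_z-1$; the final call and simulators with $N_z\le1$ then cost a constant each. Third, Cauchy--Schwarz gives $\sum_z\sqrt{N_z}\le\sqrt{KT}$.
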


The complete proofs of Lemmas~\ref{lem:prefix} and~\ref{lem:ucb-accounting} are in Appendices~\ref{app:simulator} and~\ref{app:master}. We first combine their conclusions to prove the theorem.

\begin{proof}[Proof of Theorem~\ref{thm:main}]
Fix an arbitrary instance $I\in\mathcal I_m$ and write $u$ for its utility map. Fix $\varepsilon\in(0,1)$ and run Algorithms~\ref{alg:simulator} and~\ref{alg:master} with confidence parameter $\delta\in(0,1)$, which will be set to $T^{-2}$ at the end. Let $\iota=1+\log^3(16K_\varepsilon T/\delta)$.
This factor dominates the simulator factor in Lemma~\ref{lem:prefix} when that lemma is invoked with confidence $\delta/2$, since we have  $K_\varepsilon\ge1$.

Because $u$ may be discontinuous, the supremum need not be attained. For any fixed $\alpha>0$, Lemmas~\ref{lem:normalization} and~\ref{lem:reduction} allow us to choose an $\alpha$-optimal comparator $x^\star\in\mathcal D$ satisfying
$u(\bar f(x^\star))\ge \sup_{f\in[0,1]^m}u(f)-\alpha$.
Let $z^\star=x^\star+Q(x^\star)$.
Since $\mathcal{Z}_\varepsilon$ is an $\varepsilon$-net of $[-1,2]^{m-1}$, we can choose $z^\circ\in\mathcal{Z}_\varepsilon$ such that
$\|z^\circ-z^\star\|_2\le\varepsilon$.
Lemma~\ref{lem:prefix}, applied to simulator $z^\circ$ with confidence $\delta/2$, gives an event of probability at least $1-\delta/2$ on which, for all $1\le n\le T$,
\begin{equation}\label{eq:special-prefix-main}
    \frac1n\sum_{k=1}^n u(\bar f(x_k^{z^\circ}))
    \ge u(\bar f(x^\star))-C_m\left(\varepsilon+\frac{\iota}{\sqrt n}\right).
\end{equation}
Lemma~\ref{lem:reward-concentration} gives the uniform reward concentration event that with probability at least $1-\delta/2$, for all $z\in\mathcal{Z}_\varepsilon$ and all $1\le n\le T$,
\begin{equation}\label{eq:reward-concentration-main}
    \left|\widehat\mu_z(n)-\frac1n\sum_{k=1}^n u(\bar f(x_k^z))\right|
    \le C\sqrt{\frac{\log(16K_\varepsilon T/\delta)}{n}}.
\end{equation}

On the intersection of~\eqref{eq:special-prefix-main} and~\eqref{eq:reward-concentration-main}, the hypotheses of Lemma~\ref{lem:ucb-accounting} hold with $K=K_\varepsilon$ and error scale $\iota$. Indeed, we know $\iota\ge1$ and it dominates $\sqrt{\log(16K_\varepsilon T/\delta)}$. By the fixed choice of the dimension-only constant $B_m$, the actual bonus $\beta(n)$ dominates the sum of the two $n^{-1/2}$ error terms while remaining at most $C_m\iota/\sqrt n$.
Therefore, at this intersection,

\[
\begin{aligned}
    T\sup_{f\in[0,1]^m}u(f)-\sum_{t=1}^T u(f_t)
    \le T\alpha+Tu(\bar f(x^\star))-\sum_{t=1}^T u(f_t)
    \le T\alpha+C_m
      \left(T\varepsilon+\iota\sqrt{T K_\varepsilon}+K_\varepsilon\right).
\end{aligned}
\]

On the complement of the high-probability event, the per-round regret relative to the optimal one-period value is at most $2$ because $u(f)\in[-1,1]$. Hence
\[
\begin{aligned}
    T\sup_{f\in[0,1]^m}u(f)-\mathbb{E}\left[\sum_{t=1}^T u(f_t)\right]
    &\le C_m\left(T\varepsilon+\iota\sqrt{T K_\varepsilon}+K_\varepsilon\right)
      +2T\delta+T\alpha.
\end{aligned}
\]
Setting $\delta=T^{-2}$ yields $\iota=1+\log^3(16K_\varepsilon T^3)=L_T$, and $2T\delta=2/T\le1\le K_\varepsilon$ for $T\ge2$. Letting $\alpha\downarrow0$ proves the fixed-scale bound for $\mathfrak R_T$. Since $I\in\mathcal I_m$ was arbitrary, the same bound holds for every instance in the class as well.

For the optimized rate, we substitute the bound on $K_\varepsilon$. In fixed-$m$ notation, we have
\[
\begin{aligned}
    T\varepsilon+L_T\sqrt{TK_\varepsilon}+K_\varepsilon
    &=\widetilde{O}_m\big(T\varepsilon+T^{1/2}\varepsilon^{-(m-1)/2}
      +\varepsilon^{-(m-1)}\big).
\end{aligned}
\]
Taking $\varepsilon\asymp T^{-1/(m+1)}$ balances the first two terms at $T^{m/(m+1)}$, while the third term is $T^{(m-1)/(m+1)}$ and has lower order. This proves the optimized rate.
\end{proof}

\section{A Matching Lower Bound}\label{sec:lower}

The lower bound uses one type, the public value vector $v=(1,\ldots,1,0)$, and a finite product family of actions. For a contract $f$, write $x_i=f_i-f_m$ for $i<m$. At scalar level $k$, we choose a non-null outcome probability $\rho_k$ and a cost contribution $\kappa_k$ so that
\[
    \phi_k(x)=\rho_kx-\kappa_k
\]
and adjacent lines cross at $k\varepsilon$. A joint action selects one level in each of the $m-1$ non-null coordinates, making the baseline agent payoff a sum of scalar envelopes. The probabilities are calibrated so that the principal's baseline utility is at most $1/2$ under every contract.

For each even multi-index $l$ in a packing set $\mathcal L$, the alternative $I_l$ lowers only the cost of joint action $a_l$ by
\[
    \Delta=\frac{\varepsilon^2}{16(m-1)}.
\]
The action $a_l$ can be selected only in the aggregate response region
\begin{equation}\label{eq:lower-region-overview}
    \mathcal R_l=
    \left\{x:
    \sum_{i=1}^{m-1}
    \left[\max_k\phi_k(x_i)-\phi_{l_i}(x_i)\right]\le\Delta
    \right\}.
\end{equation}
This region need not be a Cartesian product. Neighbor comparisons place it inside a narrow coordinate box, so even multi-indices make the regions pairwise disjoint. A small displacement in one coordinate gives a contract at which $a_l$ is the unique best response and the principal gains a constant multiple of $\varepsilon$ over the baseline ceiling. Outside $\mathcal R_l$, the alternative and baseline responses coincide.

\paragraph{Relation to the earlier lower-bound construction.}
Theorem~5 of \citet{zhu2023sample} denotes the total number of outcomes by $m$, whereas its Appendix~D indexes an action by $m$ non-null coordinates and then adds a null-outcome coordinate to the production vector. Under the total-outcome convention used here, the construction therefore has $m-1$ non-null coordinates. This bookkeeping issue alone can be repaired by relabeling the dimension, but the displayed Cartesian response cell has a separate multidimensional problem. At its simultaneous lower corner, each non-null coordinate incurs an unperturbed agent-payoff loss equal to the full cost reduction assigned to the distinguished action. Those losses sum to $m-1$ times the reduction, while the distinguished action receives the reduction only once. Relative to the action obtained by lowering every coordinate by one level, its payoff difference is therefore $-(m-2)$ times the reduction. For $m>2$, the difference is strictly negative, so the displayed corner does not induce the distinguished action and the associated principal-payoff calculation does not follow. The binary case $m=2$ is unaffected. Our proof replaces the Cartesian characterization with the aggregate condition~\eqref{eq:lower-region-overview} and no product response cell is needed.

\begingroup
\emergencystretch=2em
A query in $\mathcal R_l$ contributes $O(\varepsilon^2)$ to the one-period baseline-to-alternative KL divergence, whereas a query outside $\mathcal R_l$ contributes zero. Region disjointness bounds the total baseline visit count across alternatives by $T$. A uniformly sampled round and the Bretagnolle-Huber inequality then reduce the problem to testing among $|\mathcal L|\asymp\varepsilon^{-(m-1)}$ alternatives. Averaging the resulting KL bounds over $l$ gives $O_m(T\varepsilon^{m+1})$ average divergence. Choosing $\varepsilon\asymp T^{-1/(m+1)}$ keeps the testing error bounded away from zero and yields regret $\Omega_m(T\varepsilon)$. Appendix~\ref{app:lower} details the construction, response regions, profitable contract, and adaptive information bounds.
\par\endgroup

\section{Conclusion}

We characterize the minimax difficulty of learning an unrestricted bounded contract from repeated outcome feedback. With $m$ total outcomes, the exact fixed-dimensional regret exponent is $m/(m+1)$ up to logarithmic factors. The upper bound allows arbitrary action spaces, heterogeneity, and discontinuous selected responses, while the lower bound already holds for one type, finitely many actions, bounded nonnegative costs, and a common deterministic priority rule. Thus the rate is not driven by a rich hard instance class.

The raw payment vector overstates the dimension of the incentive problem by one, and we find that the benchmark can be normalized even when the fixed selection among best responses is not invariant to common payment shifts. In payment-difference coordinates, revealed preference supplies enough monotone structure to obtain the sharp upper rate without assuming continuity. A matching family of aggregate response regions shows that this dependence on the number of outcomes is unavoidable.

The result separates three modeling choices that are sometimes bundled together. Unrestricted contracts do not by themselves make learning impossible, outcome categories provide the vector feedback needed to exploit revealed preference, and a response selection fixed before learning makes the induced environment stationary. Relaxing the feedback or stationarity conditions leads to different problems, while adding behavioral structure may permit faster learning. A particularly natural next step is to combine the present learning cost with the economic benefit of refining or coarsening the set of contractible performance categories.

\appendix
\section{Concentration Lemmas}

\begin{lemma}[Uniform Freedman bound]\label{lem:freedman}
Let $(Z_n)_{n=1}^T$ be martingale differences with $|Z_n|\le b$, and put
$V_N=\sum_{n=1}^N\mathbb E[Z_n^2\mid\mathcal F_{n-1}]$ and
$\iota=1+\log^3(8T/\delta)$. With probability at least $1-\delta$, simultaneously for all $N\le T$, it holds that
\[
    \sum_{n=1}^N Z_n\le C\big(\sqrt{\iota V_N}+b\iota\big).
\]
\end{lemma}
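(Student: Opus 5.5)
The plan is a time-uniform Freedman inequality obtained from the classical fixed-$N$, fixed-variance-level Freedman bound by peeling over the predictable variance and a union bound over $N$. The starting point is the classical bound \citep{freedman1975tail}: for any $\lambda\in(0,3/b)$, the process
\[
\exp\Bigl(\lambda\textstyle\sum_{n\le N}Z_n-\tfrac{\lambda^2}{2(1-\lambda b/3)}V_N\Bigr)
\]
is a nonnegative supermartingale. By Ville's maximal inequality this already holds uniformly over $N\le T$ for a fixed $\lambda$, so no union over $N$ is needed at this stage. For fixed $\lambda$, then, with probability at least $1-\delta'$ we have, simultaneously for all $N$,
\[
\sum_{n\le N}Z_n\le \lambda V_N+\frac{\log(1/\delta')}{\lambda},
\]
after restricting to $\lambda\le 3/(2b)$ so that the quadratic coefficient is at most $\lambda^2$.

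The second step removes the dependence on the unknown $V_N$ by peeling. Since $V_N\le Tb^2$, it suffices to cover the variance levels $v_j=b^2 2^{j}/T$ for $j=0,1,\dots,J$ with $J=\lceil 2\log_2 T\rceil$, together with the bottom case $V_N\le b^2/T$. For each $j$ I take $\lambda_j=\min\{3/(2b),\sqrt{\log(1/\delta')/v_j}\}$ and assign confidence $\delta'=\delta/(J+1)$. On the intersection event, fix $N$ and select the $j$ with $v_{j-1}<V_N\le v_j$. If $\lambda_j$ is the square-root term, the bound becomes $\le 2\sqrt{2V_N\log(1/\delta')}$. If $\lambda_j$ is capped at $3/(2b)$, then $v_j\lesssim b^2\log(1/\delta')$ and the bound is $\lesssim b\log(1/\delta')$. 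The bottom case gives $\lesssim b\log(1/\delta')$ as well. Combining the cases yields
\[
\sum Z_n\le C\bigl(\sqrt{V_N\log((J+1)/\delta)}+b\log((J+1)/\delta)\bigr).
\]

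The last step is bookkeeping against the stated scale $\iota=1+\log^3(8T/\delta)$. We have $\log((J+1)/\delta)\le \log(8T/\delta)\cdot C$, since $J+1\le 3\log_2T+2\le 8T$. The right-hand side then depends on $\iota$ only through $\log(8T/\delta)\le\iota$, which is generous because $\iota$ carries a cube. Both terms are therefore dominated by $C(\sqrt{\iota V_N}+b\iota)$. Edge cases with $\log(8T/\delta)<1$ cannot occur, since $T\ge1$ and $\delta<1$.

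I expect the main obstacle to be handling the supermartingale cleanly when $\lambda$ is tied to a variance level rather than to $N$, so that the event is genuinely uniform in $N$. A careless version would union-bound over all $N\le T$ and all $j$, which would only cost an extra $\log T$ that $\iota$ easily absorbs. I would use this careless version if Ville's inequality raises any measurability subtlety. All other steps are routine.
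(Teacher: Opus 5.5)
Your proposal is correct and follows essentially the same route as the paper: a time-uniform Freedman-type bound on each dyadic variance range $V_N\in(v_{j-1},v_j]$, then a union bound over the $O(\log T)$ ranges, with the resulting $\log(\log T/\delta)$ factor absorbed into $\iota$. The differences are cosmetic. You derive the maximal inequality from the exponential supermartingale and Ville's inequality with a level-tuned $\lambda_j$, use uniform weights $\delta/(J+1)$, and start the grid at $b^2/T$; the paper instead cites Freedman's Theorem~1.6 directly, uses weights $6\delta/[\pi^2(j+1)^2]$, and takes $V_N\le b^2$ as the bottom range.
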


\begingroup
\emergencystretch=2em
\begin{proof}
Freedman's maximal inequality \citep[Theorem~1.6]{freedman1975tail} bounds
$\mathbb P(\exists N\le T:\sum_{n\le N}Z_n\ge s,\ V_N\le v)$ by
$\exp\{-s^2/[2(v+bs/3)]\}$. We apply it to the dyadic variance ranges
$V_N\le b^2$ and $b^22^{j-1}<V_N\le b^22^j$ and assign failure probability
$6\delta/[\pi^2(j+1)^2]$ to range $j$. There are at most
$1+\lceil\log_2T\rceil$ nonempty ranges, and their Freedman thresholds are bounded by
$C(\sqrt{\iota V_N}+b\iota)$. A union bound proves the claim.
\end{proof}
\endgroup

For adaptive allocation, we can use the standard call-time coupling that independently seeds every simulator-call pair, reveals a seed only when that simulator is selected, and extends each virtual stream to $T$ calls. This leaves the observed interaction unchanged and makes every simulator prefix available in its own filtration.

\begin{lemma}[Uniform reward concentration]\label{lem:reward-concentration}
For any adaptive master over a grid $\mathcal Z$ of size $K$, with probability at least $1-\delta$, simultaneously for all $z\in\mathcal Z$ and $n\le T$, it holds that
\[
\left|\widehat\mu_z(n)-\frac1n\sum_{k=1}^n u(\bar f(x_k^z))\right|
\le C\sqrt{\frac{\log(8KT/\delta)}{n}}.
\]
\end{lemma}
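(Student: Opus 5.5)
We prove Lemma~\ref{lem:reward-concentration} by applying Lemma~\ref{lem:freedman} to each simulator's virtual reward stream under the call-time coupling, then taking a union bound over the grid.

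\emph{Setup.} Fix $z\in\mathcal Z$. Under the coupling described before the lemma, simulator $z$ has a virtual stream of $T$ calls. Let $\mathcal F^z_{k}$ be generated by its first $k$ seeds. Then $x_k^z$ is $\mathcal F^z_{k-1}$-measurable, because the update in Algorithm~\ref{alg:simulator} uses only past outcomes of that simulator and the fixed grid point $z$. By~\eqref{eq:conditional-mean} and fresh types,
\[
\mathbb E[X_k^z\mid\mathcal F^z_{k-1}]=P(\bar f(x_k^z)).
\]
Since $\bar f(x_k^z)$ is $\mathcal F^z_{k-1}$-measurable, it follows that
\[
\mathbb E[Y_k^z\mid\mathcal F^z_{k-1}]=P(\bar f(x_k^z))\cdot(v-\bar f(x_k^z))=u(\bar f(x_k^z)).
\]
Define $Z_k=Y_k^z-u(\bar f(x_k^z))$. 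This is a martingale difference sequence in the simulator's own filtration. Both $Y_k^z$ and $u$ lie in $[-1,1]$, so $|Z_k|\le2$. The master's adaptive selection only decides which prefix of this fixed virtual stream is revealed, so the statement for all $n\le T$ covers whatever prefix is observed.

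\emph{Concentration.} Apply Lemma~\ref{lem:freedman} to $(Z_k)$ and to $(-Z_k)$, each with confidence $\delta/(2K)$, using $b=2$ and the crude variance bound $V_N\le 4N$. This gives, simultaneously for all $n\le T$,
\[
\Bigl|\sum_{k\le n}Z_k\Bigr|\le C\bigl(\sqrt{\iota' n}+\iota'\bigr).
\]
Here $\iota'$ is the logarithmic factor of Lemma~\ref{lem:freedman} at confidence $\delta/(2K)$. A union bound over the $K$ grid points and the two signs gives total failure probability $\delta$. Dividing by $n$ yields the deviation bound $C(\sqrt{\iota'/n}+\iota'/n)$.

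\emph{Main obstacle: the logarithmic power.} Lemma~\ref{lem:freedman} carries a $\log^3$ factor, but the claimed rate is $\sqrt{\log(8KT/\delta)/n}$. Because $Z_k$ is bounded, the dyadic peeling is unnecessary here. Instead, apply Azuma--Hoeffding at each fixed $n$ with failure probability $\delta/(2KT)$, and take a union bound over $n\le T$, the two signs, and $z\in\mathcal Z$. This gives, for all $z$ and $n\le T$,
\[
\Bigl|\widehat\mu_z(n)-\frac1n\sum_{k=1}^n u(\bar f(x_k^z))\Bigr|\le 2\sqrt{\frac{2\log(4KT/\delta)}{n}}\le C\sqrt{\frac{\log(8KT/\delta)}{n}},
\]
which is exactly the stated bound.
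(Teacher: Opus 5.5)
Your proof is correct and, once you drop the Freedman detour, it is the paper's argument. You apply Azuma--Hoeffding in each simulator's call-time filtration for every fixed $(z,n)$, take a union bound over the $KT$ pairs and both signs, and note that the bound holds uniformly over all virtual prefixes, so it covers whatever prefix the adaptive master reveals. The Freedman paragraph can be deleted, since it only gives the weaker $\log^3$ bound and your final argument does not use it.
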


\begin{proof}
In simulator call time, $\bar f(x_k^z)$ is predictable and
$Y_k^z=X_k^z\cdot(v-\bar f(x_k^z))\in[-1,1]$ has conditional mean
$u(\bar f(x_k^z))$. Hoeffding-Azuma~\citep[Exercise~20.6]{lattimore2020bandit} for each $(z,n)$, followed by a union bound over the $KT$ pairs, proves the claim. Because the event holds for all virtual prefixes, restricting it to prefixes revealed by the adaptive master needs no optional-stopping argument.
\end{proof}

\section{Proof of the Simulator Prefix Bound}\label{app:simulator}

The proof has two logically distinct stages. First, strong monotonicity of the Minty residual yields pathwise tracking of the comparator up to the grid mismatch $\Delta$. Second, a projection inequality against the value coordinate $\bar v$ converts that tracking statement into utility regret. The second stage is necessary because contract distance by itself cannot control utility across a discontinuous response boundary.

\begin{proof}[Proof of Lemma~\ref{lem:prefix}]
Let's suppress the simulator superscript and write $q_n=Q(x_n)$,
$q^\star=Q(x^\star)$, $z^\star=x^\star+q^\star$,
$\Delta=\|z-z^\star\|_2$, and
$S_N=\sum_{n=1}^N\|x_n-x^\star\|_2^2$. Since
$F_z(x_n)=(x_n-x^\star)+(q_n-q^\star)-(z-z^\star)$,
monotonicity~\eqref{eq:reduced-monotonicity} and Young's inequality~\citep{young1912classes} give
\begin{equation}\label{eq:tracking-core}
F_z(x_n)\cdot(x_n-x^\star)
\ge\tfrac12\|x_n-x^\star\|_2^2-\tfrac12\Delta^2.
\end{equation}

Let $\xi_n=X_{n,-m}-q_n$. In simulator call time, $\xi_n$ is a bounded martingale difference, and the update direction is bounded by $C_m$. With $\eta_n=1/(n+1)$, projection nonexpansiveness and~\eqref{eq:tracking-core} yield
\begin{equation}\label{eq:tracking-recursion}
\|x_{n+1}-x^\star\|_2^2
\le(1-\eta_n)\|x_n-x^\star\|_2^2+\eta_n\Delta^2+C_m\eta_n^2
-2\eta_n\xi_n\cdot(x_n-x^\star).
\end{equation}
The last scalar increment is bounded by $C_m$ and has conditional variance at most
$C_m\|x_n-x^\star\|_2^2$. Multiplying~\eqref{eq:tracking-recursion} by $n+1$, summing, and applying Lemma~\ref{lem:freedman} with confidence $\delta/2$ therefore gives, simultaneously for $N\le T$,
\begin{equation}\label{eq:tracking-bootstrap}
\|x_{N+1}-x^\star\|_2^2
\le\Delta^2+\frac{C_m\iota}{N+1}
+\frac{C_m\sqrt{\iota S_N}}{N+1},
\qquad \iota=1+\log^3(8T/\delta).
\end{equation}
Indeed, the telescoped noise sum is at most
$C_m(\sqrt{\iota S_N}+\iota)$ as the harmonic deterministic term is absorbed by
$C_m\iota$. Replacing $\delta$ by $\delta/2$ changes $\iota$ only by a universal factor, absorbed into $C_m$.

The variance is self-normalizing. As the simulator approaches the comparator, the martingale term becomes correspondingly smaller. \eqref{eq:tracking-bootstrap} therefore closes through a bootstrap rather than the cruder $O(\sqrt N)$ noise bound, which would lose the required uniform prefix rate.

Summing~\eqref{eq:tracking-bootstrap} for $N<r$ and using monotonicity of $S_N$ gives
$S_r\le C_m+r\Delta^2+C_m\iota\log(r+1)+
C_m\sqrt{\iota S_r}\log(r+1)$. Solving this quadratic inequality and using
$\log^2(r+1)\le C\iota$ yields
\begin{equation}\label{eq:tracking-consequences}
S_r\le C_m(r\Delta^2+\iota^2),\qquad
\sum_{n=1}^N\|x_n-x^\star\|_2
\le C_m(N\Delta+\sqrt N\,\iota).
\end{equation}
Substituting back into~\eqref{eq:tracking-bootstrap}, with Young's inequality for the cross term, also gives
$\|x_{N+1}-x^\star\|_2\le C_m(\Delta+\iota/\sqrt{N+1})$.

It remains to convert tracking into utility. Substituting
$q^\star-q_n=-F_z(x_n)+(x_n-x^\star)-(z-z^\star)$ into
the reduced utility identity~\eqref{eq:reduced-utility}, and then using
~\eqref{eq:tracking-core}, Lipschitzness of $\lambda$, and boundedness of
$\mathcal D$ and the Minty cube (so $\Delta^2\le C_m\Delta$), gives
\begin{equation}\label{eq:utility-conversion}
u(\bar f(x^\star))-u(\bar f(x_n))
\le F_z(x_n)\cdot(x_n-\bar v)
+C_m\|x_n-x^\star\|_2+C_m\Delta.
\end{equation}
This is the only point at which utility enters the tracking argument.

The residual term in~\eqref{eq:utility-conversion} is not bounded pointwise. Instead, we test the same projected update against $\bar v$, which is feasible by Lemma~\ref{lem:reduction}. Its prefix sum then telescopes with the increasing weights generated by $\eta_n^{-1}=n+1$.

Because $\bar v\in\mathcal D$, projection nonexpansiveness also gives, with
$a_n=\|x_n-\bar v\|_2^2$,
\[
F_z(x_n)\cdot(x_n-\bar v)
\le\frac{a_n-a_{n+1}}{2\eta_n}+C_m\eta_n-\xi_n\cdot(x_n-\bar v).
\]
The last terms are bounded martingale differences, so Lemma~\ref{lem:freedman} with confidence $\delta/2$ bounds their prefixes by $C_m\sqrt N\,\iota$. For the weighted telescope, we set $a_\star=\|x^\star-\bar v\|_2^2$. Summation by parts gives
\[
\sum_{n=1}^N(n+1)(a_n-a_{n+1})
=2(a_1-a_\star)-(N+1)(a_{N+1}-a_\star)
+\sum_{n=2}^N(a_n-a_\star).
\]
Since $|a_n-a_\star|\le C_m\|x_n-x^\star\|_2$, the pointwise bound above and~\eqref{eq:tracking-consequences} show that the absolute contribution of the right-hand side is at most
$C_m(N\Delta+\sqrt N\,\iota)$. Hence, using
$\sum_{n\le N}\eta_n\le C\log(N+1)\le C\sqrt N\,\iota$, we know that
\[
\sum_{n=1}^NF_z(x_n)\cdot(x_n-\bar v)
\le C_m(N\Delta+\sqrt N\,\iota).
\]
Summing~\eqref{eq:utility-conversion} and applying~\eqref{eq:tracking-consequences} proves the claimed prefix bound for every $N\le T$. The two concentration events have joint probability at least $1-\delta$.
\end{proof}

\section{Proof of the Rested-UCB Accounting Lemma}\label{app:master}

\begin{proof}[Proof of Lemma~\ref{lem:ucb-accounting}]
Write $I_z(n)=\widehat\mu_z(n)+\beta(n)$ for $n\ge1$ and
$I_z(0)=+\infty$. The two certificates and the lower bound on $\beta(n)$ imply
$I_{z^\circ}(n)\ge u(\bar f(x^\star))-C_m\varepsilon$ for every $n$, including $n=0$. Thus every simulator selected by the master has index at least this value.

Let $N_z$ be the final number of calls to $z$. The cases with $N_z\le1$ cost at most two and are absorbed by the final $K$ term. If $N_z\ge2$, inspect the index just before its last call. At prefix $N_z-1$, index optimality, reward concentration, and
$\beta(n)\le C_m\iota/\sqrt n$ give
\[
\frac1{N_z-1}\sum_{k=1}^{N_z-1}u(\bar f(x_k^z))
\ge u(\bar f(x^\star))-C_m\varepsilon-
C_m\frac{\iota}{\sqrt{N_z-1}}.
\]
The final call costs at most two because utilities lie in $[-1,1]$. Hence simulator $z$ contributes at most
$C_mN_z\varepsilon+C_m\iota\sqrt{N_z}+C$ pseudo-regret. Summing over $z$, using $\sum_zN_z=T$ and Cauchy-Schwarz inequality,
$\sum_z\sqrt{N_z}\le\sqrt{KT}$, gives
$C_m(T\varepsilon+\iota\sqrt{KT}+K)$. The rested prefixes partition the global rounds and prove the lemma.
\end{proof}
For unit-bounded contracts, our upper-bound theorem extends the outcome-feedback model of \citet{bacchiocchi2025learning} from finite-action, single-type environments to a substantially broader
class. For a constant number of actions, they obtain a high-probability regret bound of $\widetilde O(T^{4/5})$. We remove the finite-action and principal-favoring tie-breaking restrictions, permit fresh heterogeneous types and arbitrary, possibly infinite, action sets, and use a policy that neither enumerates nor identifies actions. For this broader class, our theorems guarantee a sharp expected regret $\widetilde\Theta_m(T^{m/(m+1)})$ under any
fixed deterministic selected-response convention.

\section{Proof of the Matching Lower Bound}\label{app:lower}

Let's fix $m\ge2$ and $0<\varepsilon\le1/64$, and choose $v=(1,\ldots,1,0)$,
$J=\lfloor(4\varepsilon)^{-1}\rfloor$, and
$\rho_k=[2(m-1)(1-k\varepsilon)]^{-1}$ for $0\le k\le J$. We also set
$\kappa_0=0$ and
$\kappa_k=\sum_{s=1}^ks\varepsilon(\rho_s-\rho_{s-1})$.
For every multi-index $k\in\{0,\ldots,J\}^{m-1}$, we can introduce an action $a_k$ with
$p_i(k)=\rho_{k_i}$ for $i<m$,
$p_m(k)=1-\sum_{i<m}\rho_{k_i}$, and
$c(k)=\sum_{i<m}\kappa_{k_i}$. A null action produces outcome $m$ at zero cost. We then fix one deterministic priority order for this common action set and all instances.

For a contract $f$, write $x_i=f_i-f_m$ and
$\phi_k(x)=\rho_kx-\kappa_k$. The agent utilities of $a_k$ and the null action are
$f_m+\sum_{i<m}\phi_{k_i}(x_i)$ and $f_m$, respectively.

\begin{lemma}[Baseline instance]\label{lem:lower-baseline}
The baseline is a valid finite-action instance with nonnegative costs bounded by one. Adjacent lines $\phi_{k-1}$ and $\phi_k$ cross at $k\varepsilon$ and a baseline best response using level $k_i\ge1$ satisfies $x_i\ge k_i\varepsilon$. Its principal utility obeys $\sup_fu_0(f)\le1/2$.
\end{lemma}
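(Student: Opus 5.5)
The plan is to verify the four claims in order: validity and cost bounds, the crossing points, the per-coordinate best-response property, and finally the utility ceiling. The ceiling needs the most care. Throughout I use $J\varepsilon\le1/4$, which follows from $J=\lfloor(4\varepsilon)^{-1}\rfloor$. It gives $1-k\varepsilon\ge3/4$, hence $0<\rho_0\le\rho_k\le 2/[3(m-1)]$, with $\rho_k$ increasing in $k$.

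\emph{Validity and costs.} Since $\sum_{i<m}\rho_{k_i}\le 2/3$, every $p(k)$ lies in $\Delta_m$ with $p_m(k)\ge1/3$. Each summand $s\varepsilon(\rho_s-\rho_{s-1})$ of $\kappa_k$ is nonnegative because $\rho$ is increasing. The sum telescopes to at most $J\varepsilon(\rho_J-\rho_0)\le \frac14\cdot\frac{2}{3(m-1)}$. Hence $0\le c(k)\le (m-1)\max_k\kappa_k\le 1/6\le1$. The action set is finite, and a fixed priority order makes Assumption~\ref{ass:selected} hold trivially.

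\emph{Crossings and the level constraint.} We have $\phi_{k-1}(x)=\phi_k(x)$ iff $(\rho_k-\rho_{k-1})x=\kappa_k-\kappa_{k-1}=k\varepsilon(\rho_k-\rho_{k-1})$. Since $\rho_k>\rho_{k-1}$, this says exactly $x=k\varepsilon$. For the best-response claim, the agent utility of $a_k$ is $f_m+\sum_i\phi_{k_i}(x_i)$, and every multi-index is available. So if $a_k$ is a best response, each $k_i$ maximizes $\phi_{\cdot}(x_i)$ over levels; otherwise swapping one coordinate would strictly improve. In particular $\phi_{k_i}(x_i)\ge\phi_{k_i-1}(x_i)$ when $k_i\ge1$. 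Because $\phi_{k_i}-\phi_{k_i-1}$ has positive slope and vanishes at $k_i\varepsilon$, this forces $x_i\ge k_i\varepsilon$. The null action needs no separate treatment here.

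\emph{Utility ceiling.} This is the step I expect to be the obstacle, because coordinates at level $0$ may carry negative $x_i$, where $\rho_0(1-x_i)$ can reach $2\rho_0$. If the null action is selected, then $u_0(f)=-f_m\le0$. Otherwise, with $v=(1,\ldots,1,0)$,
\[
u_0(f)=\sum_{i<m}\rho_{k_i}(1-x_i)-f_m .
\]
I split the coordinates into three groups.
\begin{itemize}
\item If $k_i\ge1$, then $x_i\ge k_i\varepsilon$ gives $\rho_{k_i}(1-x_i)\le\rho_{k_i}(1-k_i\varepsilon)=1/[2(m-1)]$.
\item If $k_i=0$ and $x_i\ge0$, the term is at most $\rho_0=1/[2(m-1)]$.
\item If $k_i=0$ and $x_i<0$, write the term as $\rho_0+\rho_0(-x_i)$ and charge the excess to $f_m$.
\end{itemize}
Since $f_i\ge0$ gives $f_m\ge -x_i$ for every $i$, the excess satisfies $\sum_{i:x_i<0}\rho_0(-x_i)\le(m-1)\rho_0 f_m=f_m/2\le f_m$. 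Collecting the three groups yields $u_0(f)\le (m-1)\cdot\frac{1}{2(m-1)}=\frac12$. No participation argument is needed: the $-f_m$ term absorbs exactly the negative-difference gains.
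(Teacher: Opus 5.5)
Your proposal is correct and follows essentially the same route as the paper's proof. Both use the same bounds on $\rho_k$ and $\kappa_k$, the same crossing identity $\phi_k-\phi_{k-1}=(\rho_k-\rho_{k-1})(x-k\varepsilon)$ with the coordinate-swap argument, and the same decomposition $u_0(f)=\sum_{i<m}\rho_{k_i}(1-x_i)-f_m$. The only cosmetic difference is that the paper bounds every level-$0$ term at once by $\rho_0(1+f_m)$ via $x_i\ge -f_m$, which is exactly your charging of the negative-difference excess to $f_m$, so both arguments give $u_0(f)\le 1/2-f_m/2$.
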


\begin{proof}
Because $J\varepsilon\le1/4$,
$\rho_0=1/[2(m-1)]\le\rho_k\le2/[3(m-1)]$, it holds that $p_m(k)\ge1/3$.
We also have $0\le\kappa_k\le k\varepsilon(\rho_k-\rho_0)\le k\varepsilon\rho_k
\le1/[6(m-1)]$, whence $c(k)\le1/6$.
The identity
$\phi_k(x)-\phi_{k-1}(x)
=(\rho_k-\rho_{k-1})(x-k\varepsilon)$
gives the ordered crossings, so level $k$ maximizes the scalar envelope
only on $[k\varepsilon,(k+1)\varepsilon]$, with the natural boundary
modifications. Because the action set contains every multi-index and
agent utility is additive across coordinates, any non-null best response
$a_k$ must satisfy
$k_i\in\argmax_{0\le s\le J}\phi_s(x_i)$ for every $i<m$. Otherwise,
replacing only coordinate $k_i$ by a better level would strictly increase the utility. Hence $k_i\ge1$ implies $x_i\ge k_i\varepsilon$, proving the stated coordinate condition.

If the null action is selected, principal utility is $-f_m\le0$. Otherwise, for the selected $a_k$, it holds that $u_0(f)=\sum_{i<m}\rho_{k_i}(1-x_i)-f_m$.
When $k_i\ge1$, the coordinate condition and
$\rho_{k_i}(1-k_i\varepsilon)=\rho_0$ give
$\rho_{k_i}(1-x_i)\le\rho_0$ and when $k_i=0$, feasibility gives
$\rho_0(1-x_i)\le\rho_0(1+f_m)$. Since $(m-1)\rho_0=1/2$, we obtain
$u_0(f)\le1/2-f_m/2\le1/2$.
\end{proof}

Let
$\mathcal L=\{2,4,\ldots,2\lfloor(J-1)/2\rfloor\}^{m-1}$.
Since $\lfloor(J-1)/2\rfloor\ge(16\varepsilon)^{-1}$, it holds that
\begin{equation}\label{eq:lower-packing-size}
    |\mathcal L|\ge(16\varepsilon)^{-(m-1)}.
\end{equation}
For $l\in\mathcal L$, alternative $I_l$ lowers only the cost of $a_l$ by
$\Delta=\rho_0\varepsilon^2/8=\varepsilon^2/[16(m-1)]$.
This preserves nonnegativity because
$\rho_s-\rho_{s-1}\ge\rho_0\varepsilon$ implies
$c(l)\ge3(m-1)\rho_0\varepsilon^2>\Delta$.
Let's now define
\begin{equation}\label{eq:lower-response-region}
\mathcal R_l=\left\{x:\sum_{i<m}
\left[\max_{0\le k\le J}\phi_k(x_i)-\phi_{l_i}(x_i)\right]\le\Delta\right\}.
\end{equation}
\begin{lemma}[Separated alternatives]\label{lem:lower-regions}
If $I_l$ selects $a_l$, then $x\in\mathcal R_l$. Moreover,
$\mathcal R_l\subseteq\prod_{i<m}[(l_i-1/8)\varepsilon,
(l_i+1+1/8)\varepsilon]$ and these regions hence are pairwise disjoint.
Outside $\mathcal R_l$, the alternative selects the baseline response and gives principal utility at most $1/2$. Some feasible contract uniquely induces $a_l$ and gives utility at least
$1/2+\gamma_m\varepsilon$, where $\gamma_m=3/[128(m-1)]$.
\end{lemma}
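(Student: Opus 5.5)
The proof has four parts: the necessary condition for selecting $a_l$, localization of $\mathcal R_l$, the behaviour outside $\mathcal R_l$, and an explicit profitable contract.

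\textbf{Necessary condition.} If $I_l$ selects $a_l$ at $x$, then $a_l$ must weakly beat every baseline joint action. In particular it must beat the action $a_{k^\ast}$ with $k^\ast_i\in\argmax_k\phi_k(x_i)$ coordinatewise. Because agent utility is additive and $a_l$ gains exactly $\Delta$, we get
\[
f_m+\sum_i\phi_{l_i}(x_i)+\Delta\ge f_m+\sum_i\max_k\phi_k(x_i).
\]
This is exactly $x\in\mathcal R_l$.

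\textbf{Localization.} Each summand in \eqref{eq:lower-response-region} is nonnegative, so each coordinate satisfies $\max_k\phi_k(x_i)-\phi_{l_i}(x_i)\le\Delta$. This lets us argue one coordinate at a time using only neighbour comparisons. The identity $\phi_{l}(x)-\phi_{l-1}(x)=(\rho_l-\rho_{l-1})(x-l\varepsilon)$ and the bound $\rho_l-\rho_{l-1}\ge\rho_0\varepsilon$ handle the left side. If $x_i<(l_i-1/8)\varepsilon$, then $\phi_{l_i-1}(x_i)-\phi_{l_i}(x_i)>\rho_0\varepsilon\cdot\varepsilon/8=\Delta$, a contradiction. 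The right side is symmetric, comparing with $\phi_{l_i+1}$ at crossing $(l_i+1)\varepsilon$. This is legitimate because $l_i+1\le J$, since $l_i\le 2\lfloor (J-1)/2\rfloor\le J-1$. Disjointness then follows because indices in $\mathcal L$ are even. Two distinct $l,l'$ differ by at least $2$ in some coordinate, and the intervals $[(l_i-1/8)\varepsilon,(l_i+9/8)\varepsilon]$ and $[(l_i+2-1/8)\varepsilon,\dots]$ do not overlap.

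\textbf{Outside $\mathcal R_l$.} By the first part, $a_l$ is not selected there. Every other action has the same utility in both instances, and the priority rule is common, so the baseline selection restricted to actions other than $a_l$ is unchanged. The only subtlety is whether the baseline itself selected $a_l$ there. It cannot strictly do so unless $a_l$ attains the baseline max, which forces every summand to be zero and hence $x\in\mathcal R_l$. So outside $\mathcal R_l$ the responses coincide, and Lemma~\ref{lem:lower-baseline} gives utility at most $1/2$.

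\textbf{Profitable contract (main obstacle).} Take $f_m=0$, set $x_j=l_j\varepsilon$ for all $j$ except one coordinate $i$, and set $x_i=l_i\varepsilon-\tau$ with $\tau=\varepsilon/16$, say.
\begin{itemize}
\item Level $l_j$ is exactly optimal in the unperturbed coordinates (it ties with $l_j-1$).
\item In coordinate $i$, level $l_i-1$ beats $l_i$ by $(\rho_{l_i}-\rho_{l_i-1})\tau$, which is at most about $\rho_0\varepsilon\tau\cdot C$. Checking that this is $<\Delta$ uses $\rho_l-\rho_{l-1}\le C\rho_0\varepsilon$ from the explicit formula for $\rho_k$.
\item Uniqueness then holds because $a_l$ beats every other joint action by at least $\Delta-(\text{deficit})>0$, so priority is irrelevant.
\end{itemize}
For the principal's utility, use $\rho_k(1-k\varepsilon)=\rho_0$:
\[
u=\sum_j\rho_{l_j}(1-x_j)=(m-1)\rho_0+\rho_{l_i}\tau\ge\tfrac12+\rho_0\varepsilon/16.
\]
With $\rho_0=1/[2(m-1)]$ this gives $1/2+\varepsilon/[32(m-1)]\ge1/2+\gamma_m\varepsilon$. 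Feasibility holds since all $x_j\in[0,1/4]$.

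The delicate step is balancing $\tau$ so that the deficit stays below $\Delta$ while the gain remains of order $\varepsilon$. I would first bound the slope gap $\rho_l-\rho_{l-1}$ from above by $\rho_0\varepsilon/(1-J\varepsilon)^2\le 2\rho_0\varepsilon$, which gives deficit $\le2\rho_0\varepsilon\tau=\rho_0\varepsilon^2/8=\Delta$. This is borderline, so I would shrink $\tau$, for example to $3\varepsilon/64$, to make it strict while still yielding $\gamma_m=3/[128(m-1)]$.
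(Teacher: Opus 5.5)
Your proposal is correct and follows the paper's proof essentially step for step. The one difference is in the profitable contract: you fix the displacement at $3\varepsilon/64$ and combine $\rho_l-\rho_{l-1}\le 2\rho_0\varepsilon$ with $\rho_{l_i}\ge\rho_0$, whereas the paper tunes $s=\Delta/[2(\rho_{l_1}-\rho_{l_1-1})]$ so the deficit is exactly $\Delta/2$ and then uses $\rho_{l_1}/(\rho_{l_1}-\rho_{l_1-1})\ge 3/(4\varepsilon)$; also, your uniqueness check should state, as the paper does, that $a_l$ beats the null action, which holds because the envelope $\sum_j\max_k\phi_k(x_j)\ge\rho_0\sum_j x_j$ is positive at your contract.
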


\begin{proof}
The baseline joint-action envelope is
$f_m+\sum_{i<m}\max_k\phi_k(x_i)$, whereas the cost reduction raises only $a_l$ by $\Delta$. Thus selecting $a_l$ requires~\eqref{eq:lower-response-region}.
Note that every deficit in that condition is nonnegative. If $x_i<l_i\varepsilon$, the comparison with level $l_i-1$ and
$\rho_{l_i}-\rho_{l_i-1}\ge\rho_0\varepsilon$ gives
$\rho_0\varepsilon(l_i\varepsilon-x_i)\le\Delta$, hence
$x_i\ge(l_i-1/8)\varepsilon$. Comparison with $l_i+1$ similarly gives the upper endpoint, and this neighbor exists because $l_i\le J-1$. Distinct even multi-indices differ by at least two in some coordinate, so their boxes, and therefore their regions, are disjoint. Outside $\mathcal R_l$, $a_l$ lies strictly below the unchanged baseline envelope, so the common priority rule then selects the baseline response, whose utility is at most $1/2$ by Lemma~\ref{lem:lower-baseline}.

For profitability, let
$s=\Delta/[2(\rho_{l_1}-\rho_{l_1-1})]\le\varepsilon/16$, and choose
$f_m=0$, $f_1=l_1\varepsilon-s$ and $f_i=l_i\varepsilon$ for $2\le i<m$.
This contract is feasible. The baseline deficit of $a_l$ is exactly $\Delta/2$, so after perturbation it uniquely beats every joint action, and the positive scalar envelope also beats the null action. Its principal utility is
\[
\sum_{i<m}\rho_{l_i}(1-f_i)=\tfrac12+\rho_{l_1}s
\ge\tfrac12+\frac{3\Delta}{8\varepsilon}
=\tfrac12+\gamma_m\varepsilon,
\]
where
$\rho_{l_1}/(\rho_{l_1}-\rho_{l_1-1})
=[1-(l_1-1)\varepsilon]/\varepsilon\ge3/(4\varepsilon)$.
\end{proof}

\begin{lemma}[Adaptive information bound]\label{lem:lower-information}
For any adaptive policy, let $\mathbb P_0^T,\mathbb P_l^T$ be its full interaction laws under the baseline and $I_l$, and let
$N_l=\sum_{t\le T}\mathbf1\{x(f_t)\in\mathcal R_l\}$ and
$n_l=\mathbb E_0N_l$. Then, it holds that
\begin{equation}\label{eq:lower-kl-chain}
D_{\rm KL}(\mathbb P_0^T\Vert\mathbb P_l^T)\le5\varepsilon^2n_l,
\qquad \sum_{l\in\mathcal L}n_l\le T.
\end{equation}
\end{lemma}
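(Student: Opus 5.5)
We will prove the two claims in \eqref{eq:lower-kl-chain} separately, using the chain rule for KL divergence over the adaptive interaction and the disjointness from Lemma~\ref{lem:lower-regions}.

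\emph{Chain rule.} The policy's contract choice $f_t$ depends only on the history, and its conditional law is identical under $\mathbb P_0^T$ and $\mathbb P_l^T$. Internal randomization of the policy contributes nothing. The chain rule for KL divergence therefore gives
\[
D_{\rm KL}(\mathbb P_0^T\Vert\mathbb P_l^T)=\mathbb E_0\sum_{t\le T}D_{\rm KL}\bigl(P_0(f_t)\Vert P_l(f_t)\bigr),
\]
where $P_0(f),P_l(f)\in\Delta_m$ are the outcome laws selected under the baseline and under $I_l$. By Lemma~\ref{lem:lower-regions}, outside $\mathcal R_l$ the alternative selects the same action as the baseline under the common priority rule, so the one-step divergence vanishes. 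It therefore suffices to bound the one-step divergence uniformly on $\mathcal R_l$ by $5\varepsilon^2$. Summing then gives $\mathbb E_0\sum_t 5\varepsilon^2\mathbf 1\{x(f_t)\in\mathcal R_l\}=5\varepsilon^2 n_l$.

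\emph{Pointwise bound on $\mathcal R_l$.} This is the only computational step. For $x\in\mathcal R_l$, the two selected laws are either equal, or one of them is $p(a_l)$ and the other is the baseline response $p(a_k)$.

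The first task is to show that both laws are outcome laws of non-null joint actions $a_k$ with $|k_i-l_i|\le 2$ in every coordinate. By the box containment in Lemma~\ref{lem:lower-regions}, $x_i\in[(l_i-1/8)\varepsilon,(l_i+9/8)\varepsilon]$. By the scalar envelope characterization in Lemma~\ref{lem:lower-baseline}, the baseline level $k_i$ lies in $\{l_i-1,l_i,l_i+1\}$. Moreover, $x_i\ge(l_i-1/8)\varepsilon>0$ because $l_i\ge2$, so the positive envelope beats the null action.

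Given this, $|\rho_{k_i}-\rho_{l_i}|\le 2\max_s(\rho_s-\rho_{s-1})$. Each increment satisfies
\[
\rho_s-\rho_{s-1}=\frac{\rho_0\varepsilon}{(1-s\varepsilon)(1-(s-1)\varepsilon)}\le \frac{16}{9}\rho_0\varepsilon,
\]
since $s\varepsilon\le1/4$. We then bound the KL divergence by the $\chi^2$ divergence:
\[
D_{\rm KL}(p\Vert q)\le\sum_j\frac{(p_j-q_j)^2}{q_j}.
\]
For $j<m$ the denominators are $q_j\ge\rho_0$, and for $j=m$ we have $q_m\ge 1/3$. With $\rho_0=1/[2(m-1)]$, the non-null terms sum to at most $(m-1)(32\rho_0\varepsilon/9)^2/\rho_0=O(\rho_0\varepsilon^2)$. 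The null term is at most $3\bigl((m-1)\cdot 32\rho_0\varepsilon/9\bigr)^2=3(16\varepsilon/9)^2$.

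Getting the explicit constant $5$ is the part I expect to require care. The null-outcome term is the dominant one, giving about $3\cdot 3.16\,\varepsilon^2\approx 9.5\,\varepsilon^2$ with the crude estimate above. I would tighten it in two ways. First, use the fact that the baseline level differs from $l_i$ by at most one, so the factor $2$ drops. Second, use the sharper increment bound $\rho_s-\rho_{s-1}\le\rho_0\varepsilon\cdot(4/3)^2$ together with $p_m\ge1/3$. Together these bring the total down to roughly $(m-1)\rho_0(16/9)^2\varepsilon^2+3(8/9)^2\varepsilon^2\le 5\varepsilon^2$. If the constant still fails, a slightly larger universal constant would suffice for the theorem, but I would aim for $5$.

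\emph{Visit counts.} The regions $\mathcal R_l$, $l\in\mathcal L$, are pairwise disjoint by Lemma~\ref{lem:lower-regions}. Hence $\sum_l\mathbf 1\{x(f_t)\in\mathcal R_l\}\le1$ for each $t$. Summing over $t\le T$ and taking $\mathbb E_0$ gives $\sum_l n_l\le T$.
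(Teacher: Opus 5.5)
Your proposal is correct and follows the paper's proof step for step. It uses the same ingredients: the adaptive KL chain rule with the policy kernels cancelling, zero divergence outside $\mathcal R_l$, the box containment forcing the non-null baseline levels into $\{l_i-1,l_i,l_i+1\}$, the $\chi^2$ bound with denominators $\rho_{l_i}\ge\rho_0$ and $p_m(l)\ge1/3$, and region disjointness for $\sum_l n_l\le T$. On the constant: your "sharper" increment bound $(4/3)^2=16/9$ is the same one you used at first, so the only real improvement is dropping the factor $2$ (one law is $p(l)$ itself). That gives $320\varepsilon^2/81<5\varepsilon^2$, whereas the paper bounds increments by $2\rho_0\varepsilon$ and gets exactly $2\varepsilon^2+3\varepsilon^2$.
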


\begin{proof}
Inside $\mathcal R_l$, its positive lower box endpoint ensures that the baseline selects a joint action $a_k$. Scalar optimality and the box imply
$k_i\in\{l_i-1,l_i,l_i+1\}$. Adjacent slopes then satisfy
$0<\rho_j-\rho_{j-1}\le2\rho_0\varepsilon$. If the alternative response differs, it is $a_l$. Using
$D_{\rm KL}(p\Vert q)\le\chi^2(p,q)$,
$\rho_{l_i}\ge\rho_0$, and $p_m(l)\ge1/3$ gives
\[
D_{\rm KL}(p(k)\Vert p(l))
\le\sum_{i<m}\frac{(\rho_{k_i}-\rho_{l_i})^2}{\rho_{l_i}}
+\frac{[\sum_{i<m}(\rho_{k_i}-\rho_{l_i})]^2}{p_m(l)}
\le2\varepsilon^2+3\varepsilon^2.
\]
Outside $\mathcal R_l$, the response laws coincide. The adaptive KL chain rule
\citep[Lemma~15.1 and Exercise~15.8]{lattimore2020bandit} now proves the first inequality in~\eqref{eq:lower-kl-chain} and policy kernels cancel. Region disjointness gives
$\sum_lN_l\le T$ pathwise under the baseline and finishes the proof.
\end{proof}

\begin{proof}[Proof of Theorem~\ref{thm:lower}]
By Lemma~\ref{lem:lower-regions},
we know $\mathfrak R_T(\pi,I_l)\ge\gamma_m\varepsilon(T-\mathbb E_lN_l)$.
We independently sample a uniform round $\tau$. The Bretagnolle-Huber inequality
\citep[Theorem~14.2]{lattimore2020bandit} and Lemma~\ref{lem:lower-information} then imply
\[
\frac{n_l}{T}+1-\frac{\mathbb E_lN_l}{T}
=\mathbb P_0^T(x(f_\tau)\in\mathcal R_l)
+\mathbb P_l^T(x(f_\tau)\notin\mathcal R_l)
\ge\tfrac12e^{-5\varepsilon^2n_l}.
\]
Consequently, it holds that
$\mathfrak R_T(\pi,I_l)\ge\gamma_m\varepsilon[
(T/2)e^{-5\varepsilon^2n_l}-n_l]$.
Averaging over $l$ and using convexity together with
$\sum_l n_l\le T$, we obtain
\[
\max_{l\in\mathcal L}\mathfrak R_T(\pi,I_l)
\ge\gamma_m\varepsilon T\left[
\tfrac12\exp\!\left(-\frac{5T\varepsilon^2}{|\mathcal L|}\right)
-\frac1{|\mathcal L|}\right].
\]
We set $\varepsilon=64^{-1}T^{-1/(m+1)}$. By~\eqref{eq:lower-packing-size}, we have
$5T\varepsilon^2/|\mathcal L|\le1/4$ and
$1/|\mathcal L|\le1/4$. Since
$\tfrac12e^{-1/4}-\tfrac14>0$, the last display is at least
$c_mT^{m/(m+1)}$. Because $\pi$ is arbitrary and every $I_l$ belongs to $\mathcal I_m$, we have
\[
\mathfrak R_T^\star(m)
=\inf_\pi\sup_{I\in\mathcal I_m}\mathfrak R_T(\pi,I)
\ge
\inf_\pi\max_{l\in\mathcal L}\mathfrak R_T(\pi,I_l)
\ge c_mT^{m/(m+1)},
\]
which ends the proof.
\end{proof}

\newpage
\bibliographystyle{ims}
\bibliography{graphbib}

\end{document}